\documentclass{article}

\usepackage{microtype}
\usepackage{graphicx}
\usepackage{booktabs}
\usepackage{multirow}
\usepackage{siunitx}
\usepackage{amsmath}
\usepackage{makecell}
\usepackage{stfloats}
\usepackage{float}
\usepackage[utf8]{inputenc}
\usepackage{xcolor}
\usepackage{longtable}
\usepackage[table]{xcolor}
\usepackage{colortbl}
\usepackage{array}
\usepackage{fontawesome5}

\usepackage{tikz}
\usetikzlibrary{shapes.geometric, positioning, plotmarks, patterns}
\usepackage{pgfplots}
\usepackage{amsmath, amssymb}
\usetikzlibrary{arrows.meta}
\pgfplotsset{compat=1.17}
\usepgfplotslibrary{groupplots, statistics}
\usepackage{subcaption}

\definecolor{iss_red}{HTML}{E15759}
\definecolor{att_blue}{HTML}{4E79A7}
\definecolor{norm_green}{HTML}{59A14F}
\definecolor{highlight_red}{HTML}{D62728}
\definecolor{muted_gray}{HTML}{999999}
\definecolor{guide_line}{HTML}{444444}
\definecolor{text_black}{HTML}{000000}
\definecolor{style_indicator}{HTML}{555555}
\definecolor{natureblue}{RGB}{230,236,242}
\definecolor{naturehighlight}{RGB}{242,247,244}

\usepackage{hyperref}
\usepackage[accepted]{icml2026}

\usepackage{amsmath}
\usepackage{amssymb}
\usepackage{mathtools}
\usepackage{amsthm}

\usepackage[capitalize,noabbrev]{cleveref}

\theoremstyle{plain}
\newtheorem{theorem}{Theorem}[section]
\newtheorem{proposition}[theorem]{Proposition}

\theoremstyle{definition}
\newtheorem{definition}[theorem]{Definition}

\theoremstyle{remark}

\usepackage[disable,textsize=tiny]{todonotes}

\icmltitlerunning{Embodied Interpretability}

\begin{document}

\twocolumn[
  \icmltitle{\texorpdfstring{Embodied Interpretability: Linking Causal Understanding to Generalization \\ in Vision-Language-Action Models}{Embodied Interpretability: Linking Causal Understanding to Generalization in Vision-Language-Action Models}}

  \begin{icmlauthorlist}
    \icmlauthor{Hanxin Zhang}{dani,comp}
    \icmlauthor{Mingshuo Xu}{dani,comp}
    \icmlauthor{Abdulqader Dhafer}{dani,comp}
    \icmlauthor{Shigang Yue}{comp}
    \icmlauthor{Hongbiao Dong}{birm}
    \icmlauthor{Zhou Daniel Hao}{dani,comp}
  \end{icmlauthorlist}
  \vspace{0.05in}
  \centerline{\faGithub\ \url{https://github.com/robot-future/vla-explain}}

  \icmlaffiliation{dani}{DANiLab, University of Leicester, UK  }
  \icmlaffiliation{comp}{School of Computing and Mathematical Sciences, University of Leicester, UK  }
  \icmlaffiliation{birm}{School of Metallurgy and Materials, University of Birmingham, UK  }

  \icmlcorrespondingauthor{Zhou Daniel Hao}{d.hao@leicester.ac.uk}

  \icmlkeywords{Machine Learning, ICML}
  \vskip 0.3in
]

\printAffiliationsAndNotice{}

\begin{abstract}

Vision–Language–Action (VLA) policies often fail under distribution shift, suggesting that decisions may depend on spurious visual correlations rather than task-relevant causes. We formulate visual–action attribution as an interventional estimation problem. Accordingly, we introduce the \textbf{Interventional Significance Score (ISS)}, an interventional masking procedure for estimating the causal influence of visual regions on action predictions, and the \textbf{Nuisance Mass Ratio (NMR)}, a scalar measure of attribution to task-irrelevant features. We analyze the statistical properties of ISS and show that it admits unbiased estimation, and we characterize conditions under which action prediction error provides a valid proxy for causal influence. Experiments across diverse manipulation tasks indicate that NMR predicts generalization behavior and that ISS yields more faithful explanations than existing interpretability methods. These results suggest that interventional attribution provides a simple diagnostic approach for identifying causal misalignment in embodied policies.

\end{abstract}

\begin{figure}[ht]
    \centering
    \includegraphics[width=\columnwidth]{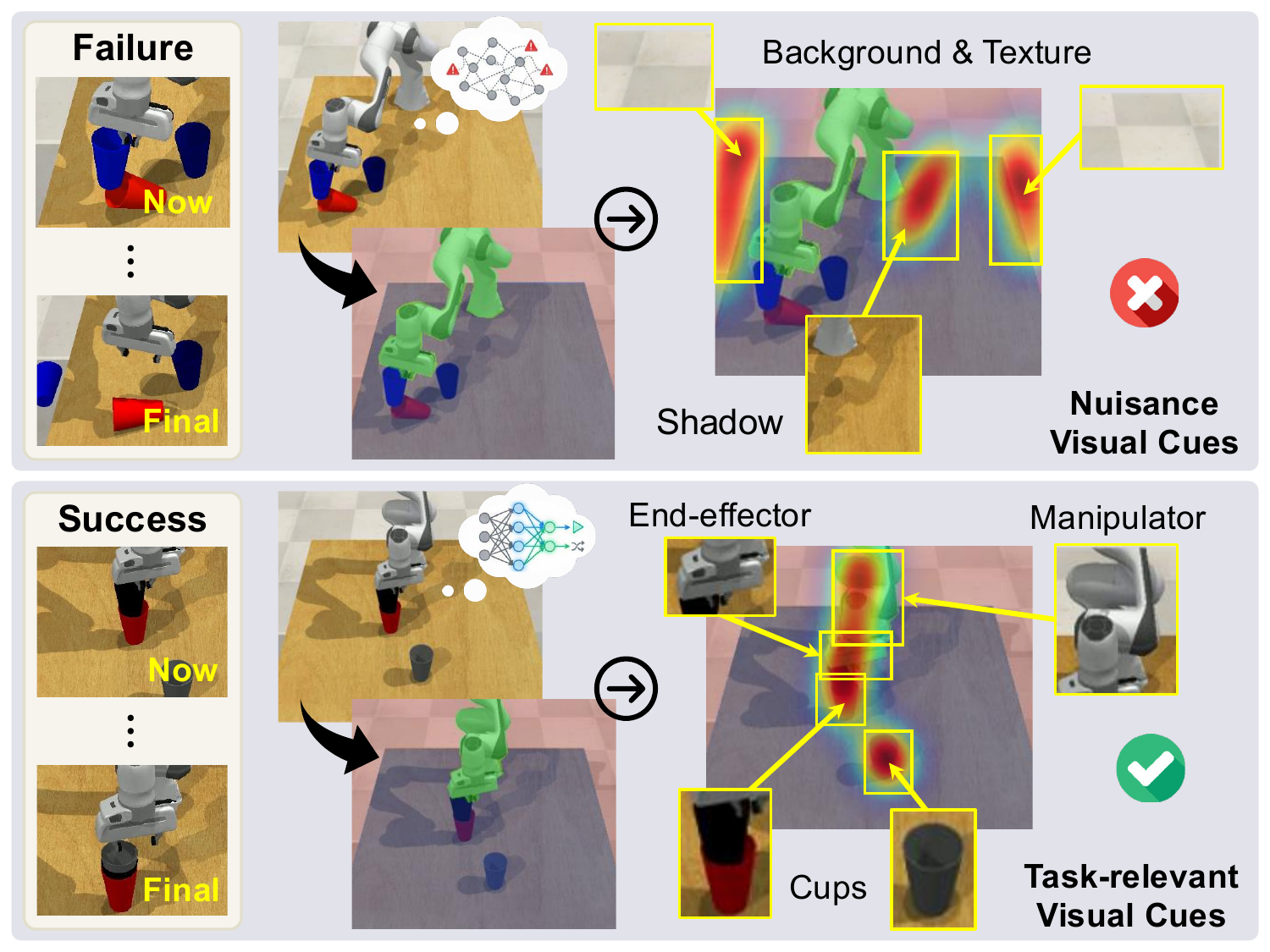}
    \caption{\textbf{Generalization of VLAs can be analyzed through action attribution.}
    For instance, in the task \textit{``stack the other cups on the top of the red cup''},
    failed trials rely more on nuisance visual cues (e.g., background, texture, and shadows) for decisions;
    successful trials rely more on task-relevant cues (e.g., manipulator, end-effector, and cups).}
    \label{fig:teaser}
\end{figure}

\begin{figure*}[t]
    \centering
    \includegraphics[width=\textwidth]{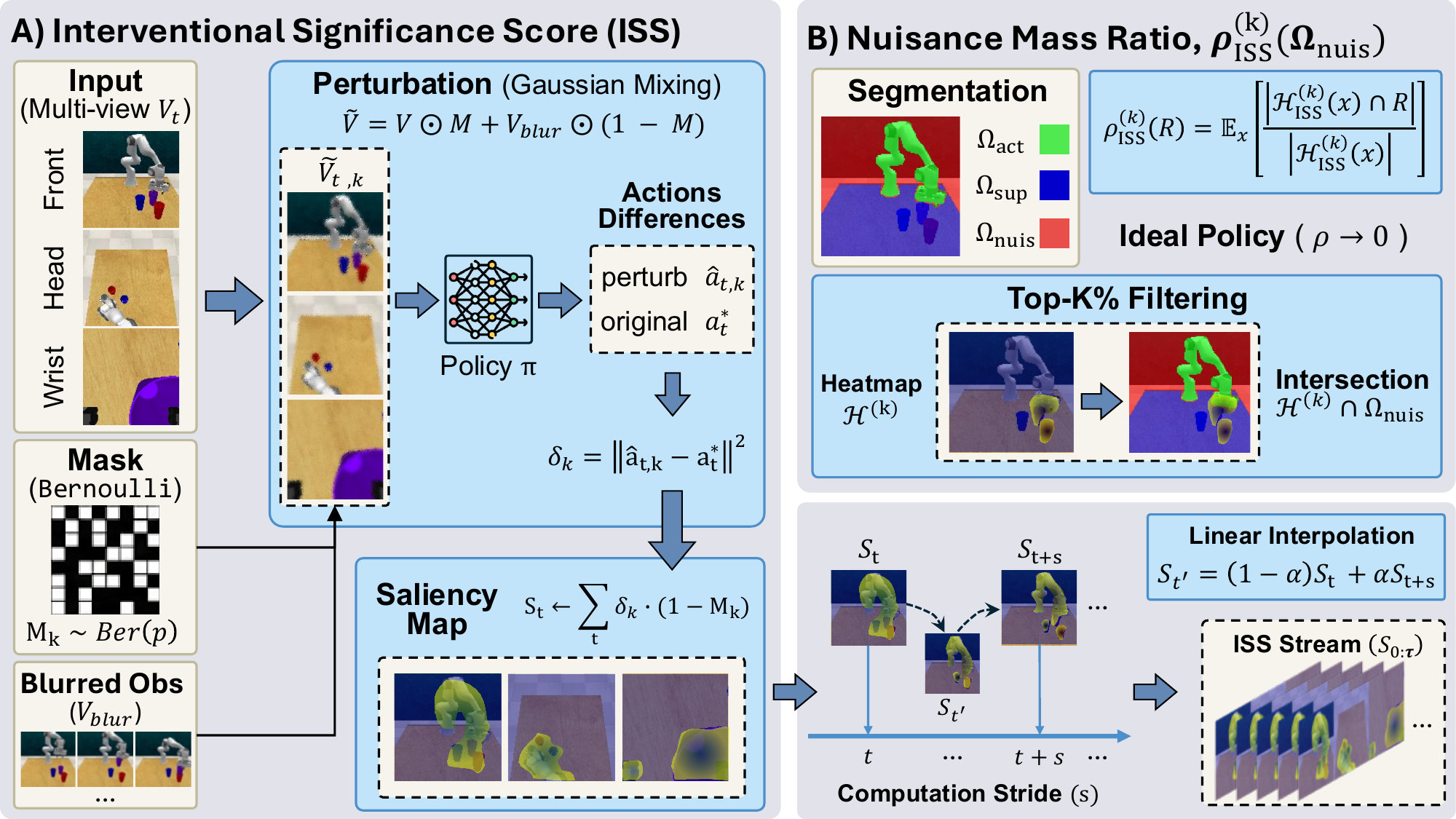}
    \caption{\textbf{The overview of the proposed interpretability approach.} Panel (A) illustrates the pipeline for generating the Interventional Significance Score (ISS), where action discrepancies ($\delta_k$) induced by Bernoulli masking ($M_k$) and Gaussian mixing perturbations ($V_{blur}$) on multi-view observations are aggregated to yield saliency maps, subsequently forming a continuous ISS stream via linear interpolation. Panel (B) defines the Nuisance Mass Ratio (NMR) metric, which quantifies policy reliance on non-causal features by computing the normalized intersection between the top-k filtered saliency regions ($\mathcal{H}^{(k)}$) and predefined nuisance segments ($\Omega_{nuis}$).
    }
    \label{fig:overview}
\end{figure*}

\section{Introduction}

Vision-Language-Action (VLA) models \cite{zhen3DVLA3DVisionlanguageaction2024, chenELEMENTALInteractiveLearning2025, zhangUPVLAUnifiedUnderstanding2025, zhangReinboTAmplifyingRobot2025, huangOTTERVisionlanguageactionModel2025, shiHiRobotOpenended2025} have demonstrated significant capabilities in embodied intelligence.
However, our empirical analysis reveals two critical anomalies in current mainstream research: (1) attention scores predominantly activate on task-irrelevant regions, such as the background \cite{kachaevdon, zhong2025survey, gong2025autofocus}; and (2) the output actions continue to follow a similar trend even when the visual inputs are completely masked \cite{omaisan2025towards, fei2025libero, kim2025fine}. These phenomena suggest that VLA models may rely on memorizing statistical mappings between tasks and actions rather than learning the underlying causal mechanisms.
We hypothesize that this reliance on spurious correlations \cite{zhang2025inspire} is a primary factor contributing to their limited Out-of-Distribution (OOD) generalization, as shown in Figure \ref{fig:teaser}.
This raises a fundamental question:
How can we quantify whether a VLA policy relies on causally relevant visual evidence rather than spurious correlations, and does this reliance predict generalization performance?

While existing interpretability research in embodied intelligence \cite{haonMechanisticInterpretabilitySteering2025, pengCounterfactualVLASelfreflective2025, zheng2025tracevla, wu2025you} has not fully resolved this challenge, it offers valuable analytical priors.
For instance, visualizing attention weights identifies focal regions, and inserting probes into intermediate hidden layers confirms the encoding of semantic concepts. These findings substantiate the utility of interpretability tools for diagnosing model behaviors.
Building on this, we investigate whether integrating mechanistic interpretability with interventional causality effectively diagnoses generalization by evaluating the extent to which the model captures causal relationships.

In this work, we introduce a causal testing method that assesses whether robot policies rely on the correct visual regions and empirically show that this measure predicts how well they generalize to new environments. The key contributions are as follows:

\begin{itemize}
    \item We formulate visual–action attribution as an interventional estimation problem and introduce two measures: the Interventional Significance Score (ISS), which estimates the causal influence of visual regions on action decisions, and the Nuisance Mass Ratio (NMR), which quantifies attribution to task-irrelevant features. (Section \ref{sec:preliminary}, Section \ref{sec:iss_nmr})
    \item We demonstrate empirically that NMR is predictive of out-of-distribution generalization, and that ISS produces more consistent and informative attributions than existing interpretability methods across diverse manipulation tasks. (Section \ref{sec:exp}, Appendix \ref{app:detailed_results_iss_nmr})
    \item We provide theoretical analysis showing that ISS admits unbiased estimation and characterize conditions under which action prediction error serves as a valid surrogate for distributional divergence, establishing a principled basis for interpreting the proposed measures. (Appendix \ref{app:theory})
\end{itemize}

\section{Related Work}

While Vision-Language-Action (VLA) models have demonstrated remarkable capabilities in end-to-end control, their internal decision-making mechanisms remain largely opaque ``black boxes'' \citep{zhangPureVisionLanguage2025}. Initial efforts to address this sought transparency through modular designs or explicit reasoning. For instance, SayCan \citep{ahnCanNotSay2022} and Code-as-Policies \citep{ahnReliableCodeaspoliciesNeurosymbolic2025} decoupled planning from execution by leveraging the probabilistic outputs or code generation capabilities of Large Language Models. Similarly, VoxPoser \citep{huangVoxposerComposable3d2023} visualized abstract instructions by generating 3D value maps. To enhance the readability of the reasoning process, recent works have incorporated Chain-of-Thought mechanisms: CoT-VLA \citep{zhao2025cotvla} autoregressively generates visual subgoals, PhysiAgent \citep{wangPhysiAgentEmbodiedAgent2025} introduces self-reflection modules, and CoC-VLA \citep{zhangCoCVLADelvingAdversarial2025} constructs causal chains to explain decisions in autonomous driving. However, these approaches predominantly focus on system-level transparency or the rationality of intermediate outputs rather than on uncovering feature-level operating mechanisms. Consequently, recent research has shifted towards mechanistic interpretability, clustering primarily around attention analysis, latent state probing, and feature disentanglement.

\textbf{Attention Analysis.} Analyzing attention weights serves as a direct method for localizing regions of interest. \citet{mitraMechanisticFinetuningVisionlanguageaction2025} proposed "Robotic Steering," which identifies and fine-tunes task-specific attention heads using few-shot demonstrations, thereby correcting behavior without altering global model parameters. To address visual redundancy, the DTP framework \citep{li2026dtp} uses attention heatmaps to identify and prune visual tokens that distract from decision-making.
Although they indicate where a model focuses, they do not establish whether those regions are causally necessary for the resulting actions. Consequently, it remains unclear how to quantitatively assess whether a VLA policy truly depends on task-relevant evidence or on nuisance features. This motivates the central question of this work: can we quantify the extent to which a VLA policy’s actions depend on causally relevant versus nuisance visual evidence, and does this dependence predict generalization performance?


\textbf{Latent State Probing.} Probing techniques are widely employed to decipher the semantic content of hidden representations. \citet{luProbingVisionlanguageactionModel2025} demonstrated, using linear probes, that OpenVLA's hidden layers explicitly encode symbolic states, such as object attributes and spatial relations. Going further, \citet{molinariEmergentWorldRepresentations2025} used embedding arithmetic to demonstrate the emergence of internal world models within VLAs that can predict environmental dynamics. In the domain of motion planning, \citet{tasWordsMotionExtracting2024} showed how to extract control vectors from Motion Transformers \cite{nayakantiWayformerMotionForecasting2022} to interpret motion parameters. Additionally, the CURE framework by \citet{yinReliableLLMbasedRobot2025} quantifies epistemic uncertainty by analyzing hidden layer features to assess task familiarity. However, a fundamental flaw of probing techniques lies in their passivity; they rely on supervised training with predefined concept sets. Furthermore, proving that information "exists" does not imply that the model actually "uses" it for decision-making, leading to a causal disconnect between observation and control logic.

\textbf{Feature Disentanglement.} To achieve fine-grained control, researchers have begun decomposing dense neural activations into sparse, interpretable features. \citet{haonMechanisticInterpretabilitySteering2025} introduced a method that projects Feed-Forward Network (FFN) vectors into token space and clusters them, enabling semantic steering of VLA behavior at inference time. In the realm of safety, \citet{wenConceptbasedDictionaryLearning2026} introduced concept-based dictionary learning, which uses Singular Value Decomposition (SVD)-based Principal Component Analysis (PCA) to extract directions representing unsafe behaviors from activations. While sparse decomposition offers a pathway to disentangle polysemanticity, existing methods struggle with complex, compositional embodied concepts, and the computational overhead of training large-scale autoencoders limits their potential for real-time application.

\section{Preliminary}
\label{sec:preliminary}

We formalize the problem setting by defining Vision-Language-Action models and reviewing the probabilistic machinery of Markov Blankets necessary to distinguish causal mechanisms from redundant statistical dependencies.

\subsection{Vision-Language-Action Models}
Vision-Language-Action (VLA) models~\cite{brohan2023rt,zitkovich2023rt,kim2024openvla,mees2024octo,black2410pi0,zhou2025vision} operate as a learned policy $\pi_\theta$ parameterized by $\theta$, mapping multimodal observations to actions. Formally, the policy predicts an action $\mathbf{a}_{t}$ at time step $t$ conditioned on a multimodal context $\mathbf{X}_{t}$:
\begin{equation}
    \mathbf{a}_{t} \sim \pi_\theta(\cdot \mid \mathbf{X}_{t})
\end{equation}
The context $\mathbf{X}_{t}$ is constructed by concatenating the sequence of visual patch embeddings $(\mathbf{v}_{1}, \dots, \mathbf{v}_{n_{1}}) \in \Omega$, and the linguistic instruction tokens $(\mathbf{l}_{1}, \dots, \mathbf{l}_{n_{2}}) \in O$:
\begin{equation}
    \mathbf{X}_{t} = [\mathbf{v}_{1}, \dots, \mathbf{v}_{n_{1}}, \mathbf{l}_{1}, \dots, \mathbf{l}_{n_{2}}] \in \mathbb{R}^{(n_{1}+n_{2}) \times d},
    \label{eq_input}
\end{equation}
where $n_{1}$ and $n_{2}$ denote the sequence lengths of visual patches and instruction tokens, respectively, and $d$ is the embedding dimension.

\subsection{Conditional Independence and Markov Blankets} \label{Sect_MarkovBlankets}
To formalize the causal dependencies between observations and actions under instruction, we utilize the concept of conditional independence.
Let $X, Y, Z$ be random variables. We say $X$ is conditionally independent of $Y$ given $Z$, denoted as $X \perp \!\!\! \perp Y \mid Z$, if $P(X \mid Y, Z) = P(X \mid Z)$.

In the context of graphical models, the \textbf{Markov Blanket} \cite{pearl1988probabilistic,verma2022equivalence} of a target variable $T$, denoted as $\mathcal{M}(T)$, is the minimal subset of variables such that $T$ is conditionally independent of all other variables in the system $\mathbf{F}$ excluding $T$ given $\mathcal{M}(T)$. Formally, for a feature set $\mathbf{F}$:
\begin{equation}
T \perp \!\!\! \perp (\mathbf{F} \setminus \mathcal{M}(T)) \mid \mathcal{M}(T).
\end{equation}
Intuitively, $\mathcal{M}(T)$ contains all the information sufficient to infer $T$. In this work, we extend this notion to identify the minimal regions in $\Omega$ required to recover the expert action $\mathbf{a}$.

\section{Theoretical Framework}
\label{sec:iss_nmr}

We introduce a causal analysis approach that quantifies causal effects via intervention and then diagnoses causal misalignment in Vision-Language-Action (VLA) models. The overview is demonstrated in Figure \ref{fig:overview}.

\subsection{Causality Quantification within Latent Space}
To quantify the causal necessity of specific input features, we propose a method for quantifying causality in latent space that measures counterfactual information divergence. The proposed quantification method follows a $do$-calculus paradigm, using a mean-based replacement in interventions and an accelerated approach in calculus.

\subsubsection{Causal Intervention}

To measure the causal contribution of the $i$-th token of $\mathbf{X}_{t}$, we approximate the marginalization of information by replacing the token with a static baseline derived from the training distribution $\mathcal{D}$. To preserve distributional validity across modalities, we define specific baselines for visual and linguistic inputs:
\begin{equation}
    \tilde{\mathbf{X}}^{(i)}_{t} = [\mathbf{x}_1, \dots, \mathbf{x}_{i-1}, \boldsymbol{\mu}_{i}, \mathbf{x}_{i+1}, \dots, \mathbf{x}_{n_{1}+n_{2}}]
\end{equation}
where $\mathbf{x}_{i} \in \mathbf{X}_{t}$, defined in Eq.~(\ref{eq_input}), and  $\boldsymbol{\mu}_{i}$ is the marginal mean embedding conditioned on the modality of token $i$:
\begin{equation}
\boldsymbol{\mu}_{i} =
    \begin{cases}
    \mathbb{E}_{\mathbf{v} \sim \mathcal{D}_{\text{vis}}}[\mathbf{v}] & \text{if } i \le n_{1} \quad \text{(Visual Mean)} \\
    \mathbb{E}_{\mathbf{l} \sim \mathcal{D}_{\text{text}}}[\mathbf{l}] & \text{if } i > n_{1} \quad \text{(Textual Mean)}
    \end{cases}
\end{equation}
This formulation constrains the ablated sequence $\tilde{\mathbf{X}}^{(i)}_{t}$ to the valid semantic subspace of the respective modality, effectively mitigating the out-of-distribution (OOD) artifacts associated with conventional zero-ablation.

\subsubsection{Interventional Significance Score (ISS)}
\label{sec:iss}

With the counterfactual policy introduced above and the full-information policy, causality can be reflected by the Interventional Significance Score (ISS) defined via divergence. To isolate the causal effect of input tokens on the current decision step without compounding errors from trajectory divergence, we evaluate ISS under teacher forcing \cite{williams1989learning}, conditioning both distributions on the ground-truth action $\pi_\theta(\cdot \mid \mathbf{X}_{t})$.

\begin{definition}[Interventional Significance Score]
The causal significance of token $i$ is quantified by the expected cumulative Kullback-Leibler (KL) divergence between the original distribution and the interventional distribution, conditioned on the expert trajectory:
\begin{equation}
    \text{ISS}_i(\theta)
        = \mathbb{E}_{\mathbf{X} \sim \mathcal{D}}
        \Bigg[
        \sum_{t=1}^{T}
        D_{\text{KL}} \Big(
        \pi_\theta(\cdot \mid \mathbf{X}_{t})
        \parallel
        \pi_\theta(\cdot \mid \tilde{\mathbf{X}}^{(i)}_{t})
        \Big)
        \Bigg]
    \label{eq:iss_def}
    \end{equation}
\end{definition}

Direct computation of Eq.~(\ref{eq:iss_def}) is intractable for high-dimensional inputs. To address this, we employ a Monte Carlo estimation strategy using randomized binary masks (Algorithm~\ref{alg:st_iss}). As shown in Appendix~\ref{app_Stochastic_Estimation}, this stochastic method is a consistent estimator of coalitional causal effects. Furthermore, under the fixed isotropic Gaussian policy assumption adopted in VLA training, the Fisher Information Matrix with respect to the action mean parameters reduces to a scaled identity matrix, implying that the KL divergence is proportional to the squared difference between predicted action means. Consequently, we use Action MSE as a computationally efficient proxy for distributional divergence in our implementation, justified by the closed-form equivalence between KL divergence and squared action-mean differences under a fixed isotropic Gaussian policy, as detailed in Appendix~\ref{app_KL_Divergence}.

\begin{algorithm}[tb]
   \caption{Interventional Significance Score (ISS)}
   \label{alg:st_iss}
\begin{algorithmic}[1]
   \STATE {\bfseries Input:} VLA Policy $\pi_{\theta}$, Visual Sequence $\mathbf{V}_{1:T}$, Number of input tokens $M=n_1+n_2$, Number of Masks $N$, Keep Probability $p$, Temporal Stride $s$, Blur Kernel $\mathcal{K}_\sigma$
   \STATE {\bfseries Output:} Saliency Maps $\mathbf{S}_{1:T}$
   \STATE {\bfseries Complexity:} Time $\mathcal{O}(\lceil\frac{T}{s}\rceil \cdot N \cdot C_{\pi})$, Space $\mathcal{O}(T \cdot H \cdot W)$

   \STATE Initialize $\mathbf{S}_{1:T} \leftarrow \mathbf{0}$
   \STATE Generate binary masks $\mathbf{m} \in \{0,1\}^{M}$ where $\mathbf{m}_{k} \sim \text{Bernoulli}(p)$
   \STATE Pre-compute blurred sequence $\mathbf{V}^{blur}_t \leftarrow \mathbf{V}_t * \mathcal{K}_\sigma$ for $t \in \{1, \dots, T\}$

   \FOR{$t = 1$ {\bfseries to} $T$ {\bfseries step} $s$}
       \STATE $a^*_t \leftarrow \pi_{\theta}(\mathbf{V}_t)$
       \STATE $\boldsymbol{\delta} \leftarrow \mathbf{0}$

       \FOR{$k = 1$ {\bfseries to} $N$}
           \STATE $\tilde{\mathbf{V}}_{t,k} \leftarrow \mathbf{V}_t \odot \mathbf{m}_k + \mathbf{V}^{blur}_t \odot (\mathbf{1} - \mathbf{m}_k)$
           \STATE $\hat{a}_{t,k} \leftarrow \pi_{\theta}(\tilde{\mathbf{V}}_{t,k})$
           \STATE $\delta_{k} \leftarrow \| \hat{a}_{t,k} - a^*_t \|_2^2$

           \STATE $\mathbf{S}_t \leftarrow \mathbf{S}_t + \delta_{k} \cdot (\mathbf{1} - \mathbf{m}_k)$
       \ENDFOR

       \STATE $\mathbf{S}_t \leftarrow \mathbf{S}_t \oslash (N \cdot (1-p))$
   \ENDFOR

   \FOR{$t = 1$ {\bfseries to} $T$}
       \IF{$t \pmod s \neq 1$}
           \STATE $t_{prev} \leftarrow s \cdot \lfloor \frac{t-1}{s} \rfloor + 1$
           \STATE $t_{next} \leftarrow \min(t_{prev} + s, T)$
           \STATE $\alpha \leftarrow (t - t_{prev}) \, / \, (t_{next} - t_{prev})$
           \STATE $\mathbf{S}_t \leftarrow (1-\alpha) \cdot \mathbf{S}_{t_{prev}} + \alpha \cdot \mathbf{S}_{t_{next}}$
       \ENDIF
   \ENDFOR

   \STATE \textbf{return} $\mathbf{S}_{1:T}$
\end{algorithmic}
\end{algorithm}

\subsection{Causal Misalignment in Learned Subspaces}
In this section, we propose an approach for spatial causal attribution. We begin by partitioning the observation space based on conditional independence properties. Building on this topology, we derive a metric to quantify \textit{causal misalignment}, the extent to which the policy’s sensitivity leaks into the irrelevant nuisance subspace.

\subsubsection{Causal Spatial Partition}
We consider a sequential decision-making setting or imitation learning setting defined by an observation space $\Omega$ (derived from the underlying state space $\mathcal{S}$), an action space $\mathcal{A}$, and an instruction space $\mathbf{L}$. Let $\pi^{*}: \Omega \xrightarrow{L} \Delta(\mathcal{A})$ denote the expert policy that maps observations $\mathbf{v} \in \Omega$ to a distribution over actions $\mathbf{a} \in \mathcal{A}$. To analyze the causal role of different image regions, we decompose $\Omega$ into distinct subsets relative to the task instruction $L$ and the ideal expert policy $\pi^*$.

\begin{definition}[Causal Spatial Partition]
The token space $\Omega$ is partitioned into three disjoint sets $\Omega = \Omega_{act} \cup \Omega_{sup} \cup \Omega_{nuis}$:
\begin{enumerate}
    \item Action-Critical Regions ($\Omega_{act}$): Tokens representing the robotic embodiment itself, specifically the manipulator arm and the end-effector.
    \item Environmental Support Regions ($\Omega_{sup}$): Tokens representing task-relevant objects and essential physical support structures, such as table surfaces or racks required for execution.
    \item Visual Nuisance Regions ($\Omega_{nuis}$): Tokens representing elements that do not physically interact with the task, such as background wall colors, lighting reflections, or distractor objects.
\end{enumerate}
\end{definition}

Crucially, the definition above pertains to the latent token space $\Omega$ rather than the raw pixel space. While pixel-level representations often suffer from entanglement where a change in lighting affects all pixels, the token space provides a higher-level semantic abstraction. In this latent space, semantic entities are spatially disentangled, making the separation between causal elements and nuisances distinct.

This structural characteristic of the token space allows us to derive the following theoretical properties regarding the expert policy.
\begin{proposition}
The expert policy $\pi^*$ satisfies the conditional independence property:
\begin{equation}
\pi^*(\mathbf{a} \mid \mathbf{X}_{\Omega}) = \pi^*(\mathbf{a} \mid \mathbf{X}_{\Omega_{act}}, \mathbf{X}_{\Omega_{sup}})
\label{eq:conditional_independence}
\end{equation}
where $\mathbf{X}_{\Omega}$ denotes the subset of input features corresponding to the region $\Omega$.
\end{proposition}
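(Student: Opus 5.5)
The plan is to derive the conditional independence from a structural model of how the expert acts, and then apply the Markov Blanket notion from Section~\ref{Sect_MarkovBlankets}. I would first make explicit an assumption implicit in the Causal Spatial Partition. The expert policy $\pi^*$ is generated by a structural mechanism $\mathbf{a} = f^*(S_{task}, L, \varepsilon)$. Here $S_{task}$ is the task-relevant part of the underlying state $\mathcal{S}$, namely the embodiment configuration and the task objects and supports. The noise $\varepsilon$ is exogenous and independent of the observation. The token-space disentanglement remark says that $S_{task}$ is a measurable function of $(\mathbf{X}_{\Omega_{act}}, \mathbf{X}_{\Omega_{sup}})$. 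The nuisance tokens $\mathbf{X}_{\Omega_{nuis}}$ encode only state components (background, lighting, distractors) that do not enter $f^*$. This is exactly the content of the phrase ``do not physically interact with the task,'' and I would state it as the working hypothesis of the proof.

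With this in place, the argument is a short chain of conditional-probability identities. Write $\mathbf{X}_{\Omega} = (\mathbf{X}_{\Omega_{act}}, \mathbf{X}_{\Omega_{sup}}, \mathbf{X}_{\Omega_{nuis}})$ using the disjointness of the partition. Then
\begin{equation*}
\pi^*(\mathbf{a}\mid \mathbf{X}_{\Omega}) = P\big(f^*(S_{task},L,\varepsilon)=\mathbf{a} \mid \mathbf{X}_{\Omega_{act}}, \mathbf{X}_{\Omega_{sup}}, \mathbf{X}_{\Omega_{nuis}}\big).
\end{equation*}
Since $S_{task}$ is determined by the first two blocks, it can be fixed inside the conditioning. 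The only remaining randomness is $\varepsilon$, which is independent of all tokens. The probability therefore depends on $\mathbf{X}_{\Omega_{nuis}}$ only through nothing. Hence it equals $P(f^*(S_{task},L,\varepsilon)=\mathbf{a}\mid \mathbf{X}_{\Omega_{act}},\mathbf{X}_{\Omega_{sup}})$, which is Eq.~(\ref{eq:conditional_independence}). Equivalently, $\mathbf{a}\perp\!\!\!\perp \mathbf{X}_{\Omega_{nuis}}\mid(\mathbf{X}_{\Omega_{act}},\mathbf{X}_{\Omega_{sup}})$. In the language of Section~\ref{Sect_MarkovBlankets}, this says $\Omega_{act}\cup\Omega_{sup}$ contains the Markov Blanket $\mathcal{M}(\mathbf{a})$ within $\mathbf{F}=\Omega$. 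I would remark that minimality is not claimed, only sufficiency.

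The main obstacle is the first step, which is really a modeling issue rather than a computation. If nuisance tokens are statistically correlated with $S_{task}$ in a way not already captured by the act/sup tokens, conditioning on them could still shift the posterior over the expert's decision. An example is lighting that correlates with object pose under partial observability. To handle this, I would require that $(\mathbf{X}_{\Omega_{act}},\mathbf{X}_{\Omega_{sup}})$ fully identifies $S_{task}$, so that the posterior over $S_{task}$ is degenerate. Alternatively, I would require the d-separation condition that every path from $\Omega_{nuis}$ to $\mathbf{a}$ passes through $S_{task}$. I would state this as the precise hypothesis under which the proposition holds, noting that it is what the token-space disentanglement paragraph asserts.
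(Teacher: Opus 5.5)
Your argument is correct under the hypotheses you state, and it follows the same line as the paper. The difference is that the paper never derives the proposition. It presents it as a consequence of the definition of $\Omega_{nuis}$ (tokens that ``do not physically interact with the task'') together with the remark that semantic entities are spatially disentangled in token space. Its ``Theoretical Rationale'' concedes that the formulation \emph{asserts} that $\Omega_{act}\cup\Omega_{sup}$ carries all the information needed for $\mathbf{a}$.

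Your contribution is an explicit structural mechanism $\mathbf{a}=f^*(S_{task},L,\varepsilon)$ with exogenous noise. Your chain of identities is then the substitution (freezing) lemma followed by the tower property, which is valid with $L$ held fixed. The payoff is that the real content of the proposition becomes visible. It holds when $S_{task}$ is identified by the act/sup tokens and $\varepsilon$ is independent of the observation. The paper hides this condition in the word ``disentangled'', and, as you note, it can fail under occlusion or partial observability.

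I have two small corrections.

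First, your fallback hypothesis is not sufficient on its own. You propose requiring that every path from $\Omega_{nuis}$ to $\mathbf{a}$ pass through $S_{task}$. Consider a path $\mathbf{X}_{\Omega_{nuis}}\leftarrow N\leftarrow C\rightarrow S_{task}\rightarrow\mathbf{a}$, e.g.\ a scene generator $C$ that drives both lighting $N$ and object pose. This path passes through $S_{task}$ as a non-collider, so it is blocked only if $S_{task}$ itself is conditioned on. You condition on the tokens, not on $S_{task}$. The alternative therefore collapses to one of two things:
\begin{itemize}
\item your identifiability assumption, or
\item the requirement that $(\mathbf{X}_{\Omega_{act}},\mathbf{X}_{\Omega_{sup}})$ d-separates $\mathbf{X}_{\Omega_{nuis}}$ from $\mathbf{a}$, which is just the proposition restated graphically.
\end{itemize}
Keep identifiability as the operative hypothesis.

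Second, your closing remark should say that $\Omega_{act}\cup\Omega_{sup}$ contains \emph{a} minimal blanket, not \emph{the} Markov blanket. A set satisfying the conditional independence always contains some minimal such set. Uniqueness, however, needs something like positivity (the intersection property). Deterministic rendering and three partly redundant camera views are likely to violate that. This bears on the next proposition rather than this one, and your disclaimer that only sufficiency is claimed is the right stance.
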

\begin{proposition}[Causal Markov Blanket]
The union of the action-critical and support regions constitutes the Causal Markov Blanket for the action variable $\mathbf{a}$:
\begin{equation}
\mathcal{M}(\mathbf{a}) = \Omega_{act} \cup \Omega_{sup}
\label{eq:markov}
\end{equation}
\end{proposition}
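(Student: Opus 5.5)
The plan is to prove the Causal Markov Blanket proposition in two parts: first \emph{sufficiency}, that $\Omega_{act}\cup\Omega_{sup}$ shields $\mathbf{a}$ from the rest of $\Omega$; then \emph{minimality}, that no proper subset does. This matches the definition of $\mathcal{M}(T)$ in Section~\ref{Sect_MarkovBlankets}: a set $S\subseteq\Omega$ is a blanket exactly when $\mathbf{a}\perp\!\!\!\perp \mathbf{X}_{\Omega\setminus S}\mid \mathbf{X}_S$ and no proper subset of $S$ has this property. Throughout we work with the expert policy $\pi^*$ conditioned on the instruction $L$, which is held fixed.

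\textbf{Sufficiency.} This is essentially a restatement of the preceding proposition, Eq.~(\ref{eq:conditional_independence}).

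Since the partition is disjoint, $\Omega\setminus(\Omega_{act}\cup\Omega_{sup})=\Omega_{nuis}$, so $\mathbf{X}_\Omega=(\mathbf{X}_{\Omega_{act}},\mathbf{X}_{\Omega_{sup}},\mathbf{X}_{\Omega_{nuis}})$. Eq.~(\ref{eq:conditional_independence}) then reads
\[
\pi^*(\mathbf{a}\mid \mathbf{X}_{\Omega_{act}},\mathbf{X}_{\Omega_{sup}},\mathbf{X}_{\Omega_{nuis}})=\pi^*(\mathbf{a}\mid \mathbf{X}_{\Omega_{act}},\mathbf{X}_{\Omega_{sup}}).
\]
This is precisely the definition $P(X\mid Y,Z)=P(X\mid Z)$ with $X=\mathbf{a}$, $Y=\mathbf{X}_{\Omega_{nuis}}$ and $Z=\mathbf{X}_{\Omega_{act}\cup\Omega_{sup}}$. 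Hence $\Omega_{act}\cup\Omega_{sup}$ is a blanket, and the true Markov blanket satisfies $\mathcal{M}(\mathbf{a})\subseteq \Omega_{act}\cup\Omega_{sup}$.

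\textbf{Minimality.} This is the main obstacle, because nothing stated so far rules out redundancy inside $\Omega_{act}\cup\Omega_{sup}$. For example, one support token could be a deterministic function of others and thus be droppable.

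I would handle this by making explicit the modeling content of the Causal Spatial Partition: every token in $\Omega_{act}\cup\Omega_{sup}$ represents an entity that physically participates in the task. Concretely, I would assume a faithfulness/positivity condition: for each token $j$ in this set, there exist values of the remaining tokens such that varying $\mathbf{X}_j$ changes $\pi^*(\cdot\mid\cdot)$.

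Given this condition, suppose $S\subsetneq\Omega_{act}\cup\Omega_{sup}$ were a blanket, and pick $j\in(\Omega_{act}\cup\Omega_{sup})\setminus S$. Then $\mathbf{a}\perp\!\!\!\perp\mathbf{X}_j\mid\mathbf{X}_S$. Combining this with the sufficiency result via the contraction/weak-union axioms of conditional independence shows that $\pi^*$ is invariant to $\mathbf{X}_j$ given the other relevant tokens. This contradicts faithfulness.

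Positivity of the joint distribution is needed so that the intersection property holds. That property is also what makes the Markov blanket unique, so the minimal blanket can legitimately be denoted $\mathcal{M}(\mathbf{a})$.

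\textbf{Conclusion.} Combining the two parts gives $\mathcal{M}(\mathbf{a})=\Omega_{act}\cup\Omega_{sup}$. I would flag in the write-up that minimality depends on the non-redundancy (faithfulness) assumption, which is implicit in the semantic definition of the partition. Without it, the argument only establishes the inclusion $\mathcal{M}(\mathbf{a})\subseteq\Omega_{act}\cup\Omega_{sup}$.
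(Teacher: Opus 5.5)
\textbf{Comparison with the paper.} Your sufficiency step is the paper's entire argument. The paper's only justification is the ``Theoretical Rationale'' paragraph. It reads Eq.~(\ref{eq:conditional_independence}) as saying that $\Omega_{act}\cup\Omega_{sup}$ contains all the information needed to determine $\mathbf{a}$, and then names this union the Markov blanket. In effect it treats the equality as a modeling postulate of the semantic partition. It never argues minimality or uniqueness. Your second half is therefore an addition rather than a competing route, and your diagnosis is right: the stated definitions only show that the union is \emph{a} blanket, and equality needs extra hypotheses of the kind you introduce. What your version buys over the paper's is that these hidden assumptions become explicit.

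\textbf{Tightenings.}

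(i) The contradiction step uses only decomposition and weak union, which hold for every distribution. Suppose $S\subsetneq\Omega_{act}\cup\Omega_{sup}$ were a blanket. Decomposition gives $\mathbf{a}\perp\!\!\!\perp\mathbf{X}_{(\Omega_{act}\cup\Omega_{sup})\setminus S}\mid\mathbf{X}_S$. Weak union then gives $\mathbf{a}\perp\!\!\!\perp\mathbf{X}_j\mid\mathbf{X}_{(\Omega_{act}\cup\Omega_{sup})\setminus\{j\}}$. Neither contraction nor the sufficiency result is needed. Also state your non-redundancy condition as dependence on a set of positive probability; otherwise it does not contradict an almost-sure independence.

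(ii) Positivity matters through uniqueness, and this already affects your sufficiency step. The inclusion $\mathcal{M}(\mathbf{a})\subseteq\Omega_{act}\cup\Omega_{sup}$ relies on blankets being closed under intersection, i.e.\ the intersection property. Without positivity you only get that \emph{some} minimal blanket lies inside the union. So your closing caveat (that without non-redundancy you still get the inclusion) should also be conditioned on positivity.

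(iii) Under positivity, the Markov boundary is exactly the set of tokens $j$ such that $\mathbf{a}$ is not independent of $\mathbf{X}_j$ given all other tokens. Your non-redundancy assumption is therefore essentially a restatement of the minimality you want to prove. For image tokens it is also a strong idealization; for example, many adjacent patches of a table surface in $\Omega_{sup}$ are likely redundant given one another.
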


Theoretical Rationale: This formulation applies the concept of a Markov Blanket from Bayesian networks. It asserts that $\Omega_{act}$ and $\Omega_{sup}$ contain all the necessary information to determine the optimal action $a$. Consequently, any statistical dependence of a learned model $\pi_\theta$ on $\Omega_{nuis}$ constitutes a causal hallucination, indicating that the model is relying on spurious correlations rather than physical causality.

\subsubsection{The Causal Misalignment Metric}

Based on the spatial partition, we define a quantitative metric to evaluate the density of critical tokens within any given region $R$ under an importance scoring function $\phi$.

\begin{definition}[Regional Mass Ratio]
Let $\phi(\mathbf{X})$ be the non-negative importance score (e.g., Attention Score or Interventional Significance) for given input $\mathbf{X}$. Let $\mathcal{H}^{(k)}_{\phi}(\mathbf{X})$ denote the set of critical tokens accounting for the top $k\%$ of the cumulative importance mass based on $\phi$. For a specific spatial region $R \in \{\Omega_{act}, \Omega_{sup}, \Omega_{nuis}\}$, the Regional Mass Ratio is defined as:
\begin{equation}
\rho^{(k)}_{\phi}(R) = \mathbb{E}_{\mathbf{X}} \left[ \frac{\left| \mathcal{H}^{(k)}_{\phi}(\mathbf{X}) \cap R \right|}{\left| \mathcal{H}^{(k)}_{\phi}(\mathbf{X}) \right|} \right]
\label{eq:rho_region}
\end{equation}
\end{definition}

While the Regional Mass Ratio provides a general framework for quantifying token distribution across any region, detecting causal misalignment requires focusing specifically on the irrelevant background. Therefore, we instantiate this metric for the nuisance region $\Omega_{nuis}$ using the Interventional Significance Score (ISS), termed the Nuisance Mass Ratio (NMR):
\begin{definition}[Nuisance Mass Ratio (nmr@k)]
The Nuisance Mass Ratio is defined as:
\begin{equation}
\rho^{(k)}_{\text{ISS}}(\Omega_{nuis}) = \mathbb{E}_{\mathbf{X}} \left[ \frac{\left| \mathcal{H}^{(k)}_{\text{ISS}}(\mathbf{X}) \cap \Omega_{nuis} \right|}{\left| \mathcal{H}^{(k)}_{\text{ISS}}(\mathbf{X}) \right|} \right]
\label{eq:NMR}
\end{equation}
\end{definition}

The rationale and physical meaning of nmr@k are detailed below:
\begin{itemize}
    \item Metric Meaning: This specifically quantifies Causal Misalignment. It measures the proportion of tokens identified as "causally significant" (via ISS) that actually belong to the irrelevant background ($\Omega_{nuis}$).
    \item Physical Scenario: In a robotic pick-and-place task, if the model erroneously relies on a specific wall texture or a light reflection to predict the gripper's movement, these background tokens will exhibit high ISS values. Consequently, $\rho^{(k)}_{\text{ISS}}(\Omega_{nuis})$ will be high, indicating that the policy is overfitting to spurious correlations.
    \item Ideal Behavior: A robust expert policy should satisfy $\rho^{(k)}_{\text{ISS}}(\Omega_{nuis}) \approx 0$, implying that its decision-making is entirely driven by the agent and relevant objects ($\Omega_{act} \cup \Omega_{sup}$), effectively ignoring the visual nuisances.
\end{itemize}

\section{Experiments \& Results}
\label{sec:exp}

We empirically validate the proposed framework on the AGNOSTOS benchmark \cite{zhou2025exploring} using a strictly offline interventional protocol to decouple causal mechanisms from simulator artifacts. All experiments are conducted on a VLA policy $\pi_{0.5}$ \cite{zhou2025vision}. We use 3600 episodes from the seen tasks $S$ for supervised fine-tuning $\pi_{0.5}$, and 575 episodes from the unseen tasks $U$ for evaluation. The unseen tasks $U$ are divided into two levels of generalization, denoted as $U_1$ and $U_2$. Subset $U_1$ consists of 13 tasks that share partial semantic overlap with $S$, such as similar object categories like cups or comparable action primitives such as putting. In contrast, subset $U_2$ contains 10 tasks that introduce entirely novel scenarios, with no overlapping objects or actions relative to $S$. More training details and results about VLA policy $\pi_{0.5}$ can be found in Appendix \ref{app:training_details}.

\subsection{Task Success Rate and NMR at Top-$k$}
We compute the Pearson correlation coefficient ($r$) between the Nuisance Mass Ratio (nmr@k) and task success rate. More specifically, we evaluated the success rate of $\pi_{0.5}$ using the RLBench simulator \cite{james2020rlbench} across 5 random seeds. For each seed, we evaluated all 41 tasks in the seen and unseen sets ($S$, $U_1$, and $U_2$) across 25 trials and computed their mean success rate and the Interventional Significance Score (ISS). We assessed Pearson correlation coefficients ($r$) across $k \in \{1, 5, 10, 15, 20\}$; Figure~\ref{fig:rho_sr_correlation} highlights the optimal NMR@10 scenario yielding a maximal negative correlation of $-0.77$, while full results are detailed in Appendix~\ref{app:quantitative_result}.

\begin{figure}[ht]
    \centering
    \includegraphics[width=0.9\columnwidth]{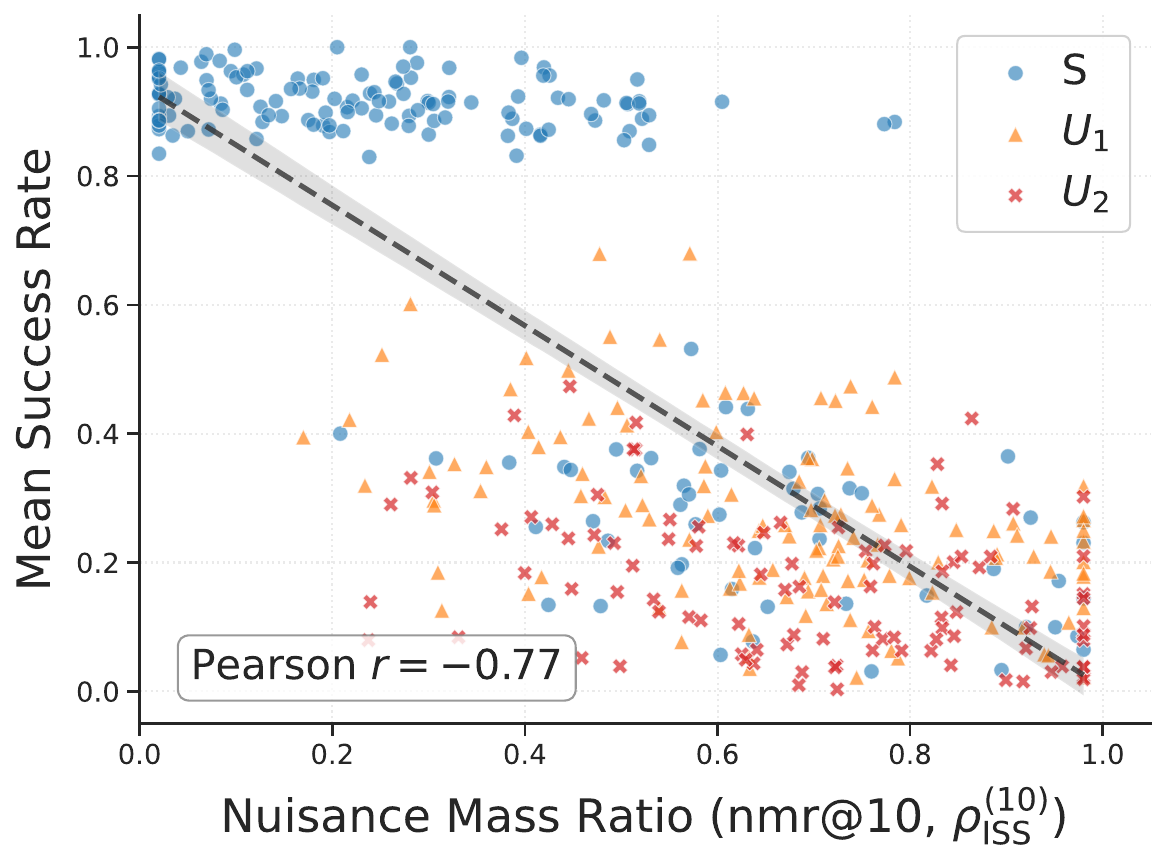}
    \caption{
    \textbf{Relationship between nmr@k and success rate.}
    Each sample point corresponds to the success rate of a single task evaluated under one random seed. For each of the 5 random seeds, 41 different tasks were evaluated, with each task executed over 25 trials to compute its success rate.}
    \label{fig:rho_sr_correlation}
\end{figure}

\subsection{Robustness and Fidelity of ISS}

Following the intervention semantics in structural causal models~\cite{Pearl_2009, Pearl12}, we design two types of experiments to evaluate explanatory robustness and fidelity. We adopt \textbf{soft interventions} to assess robustness by injecting Gaussian noise into nuisance regions of the visual inputs, which increases input variability while preserving the underlying causal mechanisms. In contrast, we adopt \textbf{hard interventions} to assess fidelity by explicitly modifying nuisance regions via three controlled perturbations: textural, geometric, and patch-based. To benchmark our approach, we compare ISS with attention score (ATT) and token norm (NORM), and provide details on the processing of these interpretability baselines in Appendix~\ref{app:baseline_details}.

\begin{figure}[ht]
    \centering
    \begin{tikzpicture}
        \begin{axis}[
            width=8.3cm, height=6.5cm,
            axis background/.style={fill=gray!5},
            grid=major,
            grid style={white, line width=0.8pt},
            xmode=log,
            x dir=reverse,
            xmin=0.0008, xmax=0.1,
            xlabel={Mean Square Error $\Delta A$ ($\downarrow$)},
            xlabel style={font=\small},
            xtick={0.001, 0.01, 0.1},
            xticklabels={$10^{-3}$, $10^{-2}$, $10^{-1}$},
            xticklabel style={font=\footnotesize},
            ylabel={Cosine Similarity $\Delta S$ ($\uparrow$)},
            ylabel style={font=\small},
            ymin=0.94, ymax=1.015,
            ytick={0.94, 0.96, 0.98, 1.00},
            yticklabel style={font=\footnotesize, /pgf/number format/fixed, /pgf/number format/precision=2},
            legend pos=south west,
            legend style={draw=none, fill=white, fill opacity=0.8, font=\scriptsize, inner sep=2pt}
        ]
            \addplot[only marks, mark=star, mark size=4.5pt, color=iss_red, thick]
                coordinates {(0.002, 0.995)};
            \addlegendentry{\textbf{ISS (Ours)}}
            \node[anchor=south east, font=\tiny\bfseries, text=iss_red!80!black, xshift=-2pt]
                at (axis cs: 0.002, 0.995) {(0.002, 0.995)};

            \addplot[only marks, mark=square*, mark size=3.5pt, color=att_blue]
                coordinates {(0.002, 0.959)};
            \addlegendentry{Attention Score}
            \node[anchor=north, font=\tiny\bfseries, text=att_blue!80!black, yshift=-2pt]
                at (axis cs: 0.002, 0.959) {(0.002, 0.959)};

            \addplot[only marks, mark=triangle*, mark size=4pt, color=norm_green]
                coordinates {(0.011, 0.999)};
            \addlegendentry{Token Norm}
            \node[anchor=south, font=\tiny\bfseries, text=norm_green!80!black, yshift=2pt]
                at (axis cs: 0.011, 0.999) {(0.011, 0.999)};

            \node[anchor=east, font=\scriptsize\bfseries, fill=iss_red!10, text=iss_red!60!black, rounded corners=2pt, inner sep=3pt] (optLabel)
                at (axis cs: 0.001, 0.980) {Optimal Balance};
            \draw[->, >=stealth, thick, iss_red!60!black] (optLabel.north) -- (axis cs: 0.002, 0.991);


        \end{axis}
    \end{tikzpicture}
    \caption{\textbf{Optimal Robustness.} ISS (red star) occupies the optimal top-right region, simultaneously maximizing the cosine similarity of the saliency map and minimizing the action's MSE compared to other explanatory methods.}
    \label{fig:pareto_robustness}
\end{figure}

\textbf{Robustness study.} We add Gaussian noise to the full-image input and select the top 5\% episodes with the lowest action Mean Squared Error (MSE) across all tasks, yielding 200 episodes. Inspired by prior robustness studies~\cite{cohen2019certified, salman2019provably}, we normalize the visual inputs and set the Gaussian noise standard deviation to \text{$0.25$} in pixel space, aiming to increase input entropy without disrupting semantic structure. During the experiments, we inject noise only into nuisance regions and evaluate saliency stability using cosine similarity ($\Delta S$) and action MSE ($\Delta A$) between clean and perturbed maps. We report episode-averaged cosine similarity and action MSE in Figure~\ref{fig:pareto_robustness}. This Pareto plot details cosine similarity on the y-axis and action MSE on the x-axis, positioning ISS at $(0.002, 0.995)$, attention-based saliency at $(0.002, 0.959)$, and token-norm saliency at $(0.011, 0.999)$. Crucially, to align with the robustness objective of minimizing action deviation ($\Delta A \downarrow$) while maximizing saliency consistency ($\Delta S \uparrow$), we invert the x-axis to situate the optimal trade-off in the top-right quadrant.

\begin{figure}[ht]
    \centering
    \includegraphics[width=\linewidth]{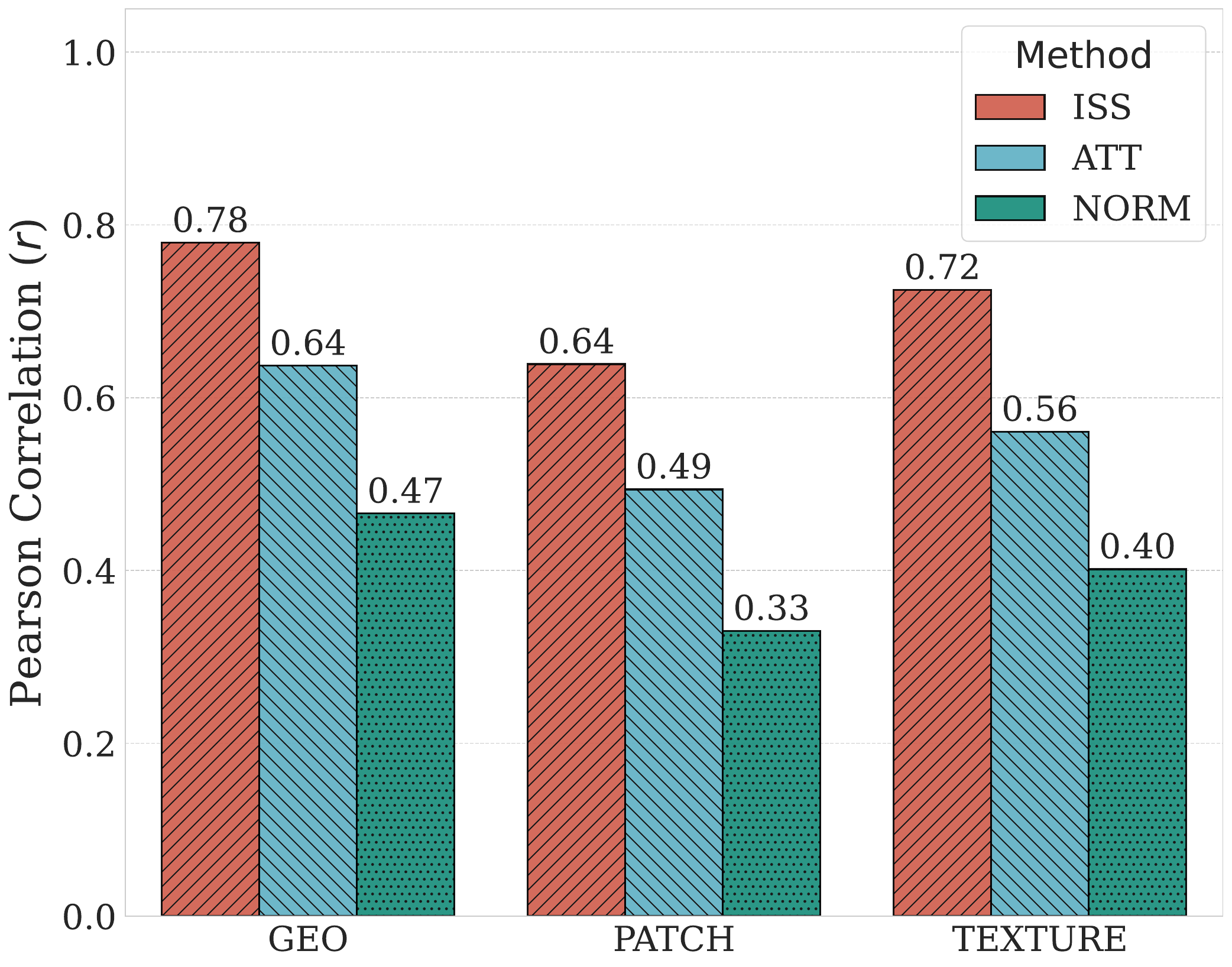}
    \caption{\textbf{Saliency Fidelity Analysis.} The bar chart displays the correlation between action MSE and saliency map changes across geometric, patch, and texture perturbations. ISS consistently shows a stronger linear alignment and higher Pearson coefficients than the ATT and NORM baselines.}
    \label{fig:correlation_summary}
\end{figure}

\textbf{Fidelity study.} We apply three types of structured perturbations to nuisance regions across 1,000 randomly sampled episodes, including texture, geometric, and patch perturbations. For each episode, we compute saliency maps for both the clean input \text{$S$} and the perturbed input \text{$S'$} using ISS. For analysis, we continue to use cosine similarity between \text{$S$} and \text{$S'$} and examine its relationship with the action MSE \text{$\Delta A$}. Figure~\ref{fig:correlation_summary} summarizes the mean results under the three types of perturbations, comparing ISS with attention scores (ATT) and token norm (NORM). The results show Pearson correlation coefficients of \text{$0.78$}, \text{$0.64$}, and \text{$0.72$} for ISS across the three perturbation settings. Correspondingly, ATT yields \text{$0.64$}, \text{$0.49$}, and \text{$0.56$}, while NORM yields \text{$0.47$}, \text{$0.33$}, and \text{$0.40$}. We provide the detailed results in Appendix \ref{app:quantitative_result}.

\vspace{-0.08in}
\subsection{Hyperparameter of ISS}

We evaluate the sensitivity of ISS to the number of Monte Carlo interventions $\text{N}$ and masking probability $\text{p}$, sweeping the parameter space of $\text{N} \in [50, 200]$ with a stride of $50$ and $\text{p} \in [0.1, 0.5]$ with a stride of $0.1$. Across 5 random seeds, we conducted a quantitative analysis by sampling 500 episodes per configuration to compute the Action MSE for each episode. We aim to select the ISS configuration that minimizes the interventional Action MSE on the original episodes.

\begin{table}[t]
    \centering
    \caption{\textbf{Hyperparameter Analysis of ISS.} We evaluate the intervention count \text{$N$} and masking ratio \text{$p$} by measuring the average Action MSE between original and masked ISS across 500 sampled episodes (5 seeds). The configuration \text{$(N=100, p=0.3)$} is selected as optimal for yielding minimal action perturbation.}
    \label{tab:ablation_multiseed}
    \setlength{\tabcolsep}{2pt}
    \renewcommand{\arraystretch}{1.1}
    \resizebox{\columnwidth}{!}{%
        \begin{tabular}{@{}ccccc@{}}
            \toprule
            \rowcolor{natureblue}
            \textbf{Interventions} & \textbf{Mask Ratio} & \textbf{Seen MSE} & \textbf{Unseen MSE} & \textbf{Avg. MSE} \\
            \rowcolor{natureblue}
            \textbf{($N$)} & \textbf{($p$)} & \textbf{($\times 10^{-3}$)} & \textbf{($\times 10^{-3}$)} & \textbf{($\times 10^{-3}$)} \\
            \midrule

            \multirow{5}{*}{$50$}
             & $0.1$ & $1.2 \pm 0.1$ & $8.5 \pm 0.3$ & $4.9 \pm 0.2$ \\
             & $0.2$ & $1.1 \pm 0.1$ & $9.0 \pm 0.4$ & $5.1 \pm 0.2$ \\
             & $0.3$ & $1.5 \pm 0.2$ & $9.5 \pm 0.5$ & $5.5 \pm 0.3$ \\
             & $0.4$ & $1.2 \pm 0.1$ & $8.8 \pm 0.3$ & $5.0 \pm 0.2$ \\
             & $0.5$ & $1.4 \pm 0.2$ & $9.2 \pm 0.4$ & $5.3 \pm 0.3$ \\
            \midrule

            & $0.1$ & $1.2 \pm 0.1$ & $7.5 \pm 0.2$ & $4.4 \pm 0.1$ \\
             & $0.2$ & $1.0 \pm 0.0$ & $6.8 \pm 0.2$ & $3.9 \pm 0.1$ \\
             \rowcolor{naturehighlight}
             & \textbf{0.3} & $\mathbf{1.0 \pm 0.1}$ & $\mathbf{6.4 \pm 0.2}$ & $\mathbf{3.7 \pm 0.1}$ \\
             & $0.4$ & $1.1 \pm 0.1$ & $6.9 \pm 0.2$ & $4.0 \pm 0.1$ \\
            \multirow{-5}{*}{$\mathbf{100}$} & $0.5$ & $1.2 \pm 0.1$ & $7.5 \pm 0.3$ & $4.4 \pm 0.2$ \\
            \midrule

            \multirow{5}{*}{$150$}
             & $0.1$ & $1.5 \pm 0.2$ & $8.0 \pm 0.3$ & $4.8 \pm 0.2$ \\
             & $0.2$ & $1.4 \pm 0.1$ & $7.5 \pm 0.2$ & $4.5 \pm 0.1$ \\
             & $0.3$ & $1.2 \pm 0.1$ & $7.0 \pm 0.2$ & $4.1 \pm 0.1$ \\
             & $0.4$ & $1.8 \pm 0.2$ & $8.5 \pm 0.4$ & $5.2 \pm 0.3$ \\
             & $0.5$ & $2.0 \pm 0.3$ & $9.0 \pm 0.5$ & $5.5 \pm 0.4$ \\
            \midrule

            \multirow{5}{*}{$200$}
             & $0.1$ & $1.5 \pm 0.2$ & $9.5 \pm 0.4$ & $5.5 \pm 0.3$ \\
             & $0.2$ & $1.2 \pm 0.1$ & $10.0 \pm 0.5$ & $5.6 \pm 0.3$ \\
             & $0.3$ & $1.5 \pm 0.2$ & $9.0 \pm 0.4$ & $5.3 \pm 0.3$ \\
             & $0.4$ & $1.2 \pm 0.1$ & $11.0 \pm 0.6$ & $6.1 \pm 0.3$ \\
             & $0.5$ & $1.1 \pm 0.1$ & $12.0 \pm 0.8$ & $6.6 \pm 0.4$ \\
            \bottomrule
        \end{tabular}%
    }
\end{table}

As shown in Tab. \ref{tab:ablation_multiseed}, the quantitative comparisons indicate that increasing the masking ratio to $\text{p}=0.5$ consistently results in elevated mean errors and standard deviations across all $\text{N}$ settings (e.g., reaching $5.5 \times 10^{-3} \pm 0.4$ at $\text{N}=150$). In contrast, the configuration $(\text{N}=100, \text{p}=0.3)$ records the lowest error magnitude and variance, achieving a global minimum Average MSE of $3.7 \times 10^{-3} \pm 0.1$. We employed this ISS configuration $(\text{N}=100, \text{p}=0.3)$ for all other experiments reported throughout the paper.

\subsection{Additional Results}

\begin{figure*}[t]
    \centering
    \setlength{\tabcolsep}{2pt}
    \begin{tabular}{cc}
        \includegraphics[width=0.48\linewidth]{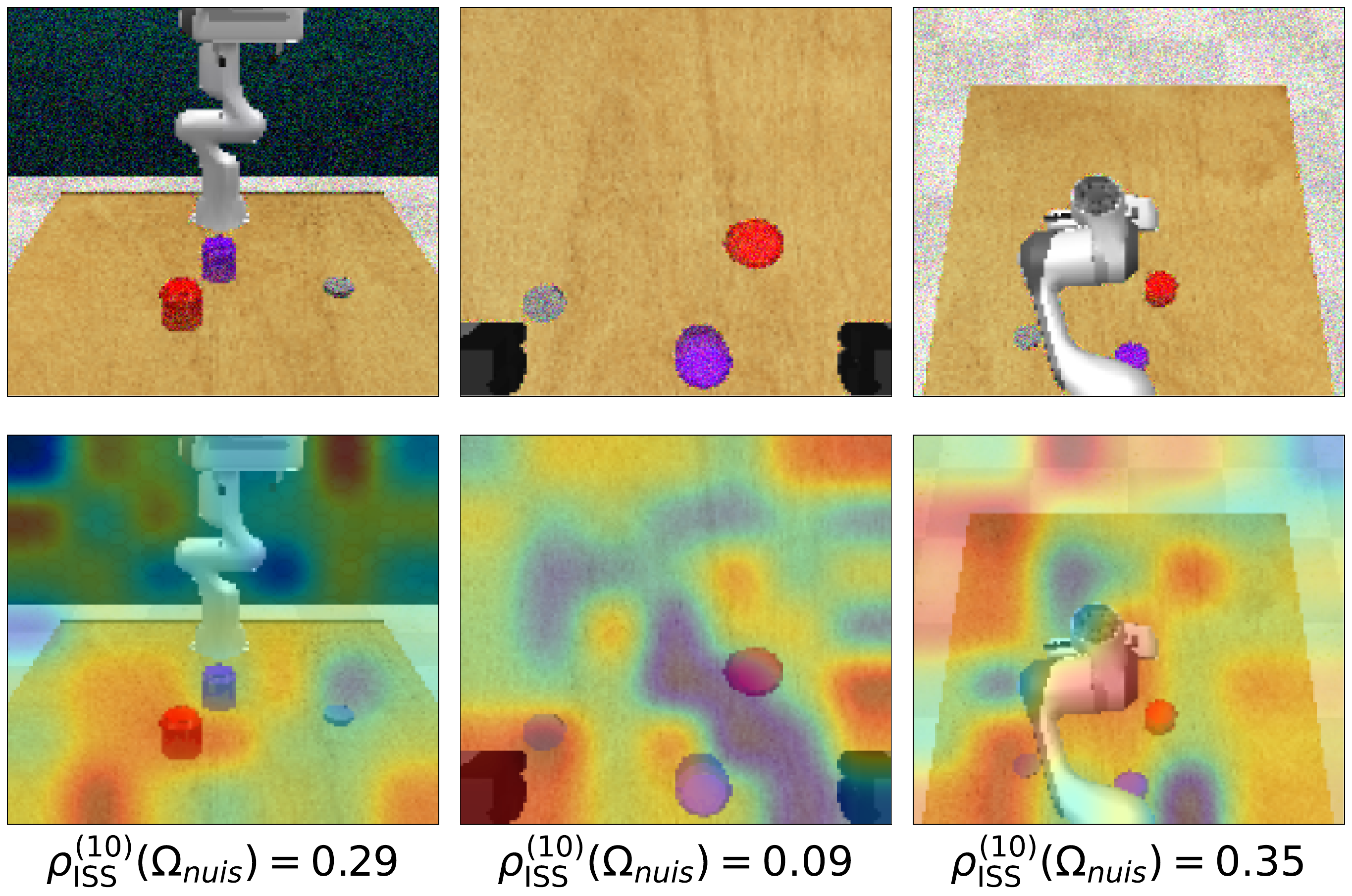} &
        \includegraphics[width=0.48\linewidth]{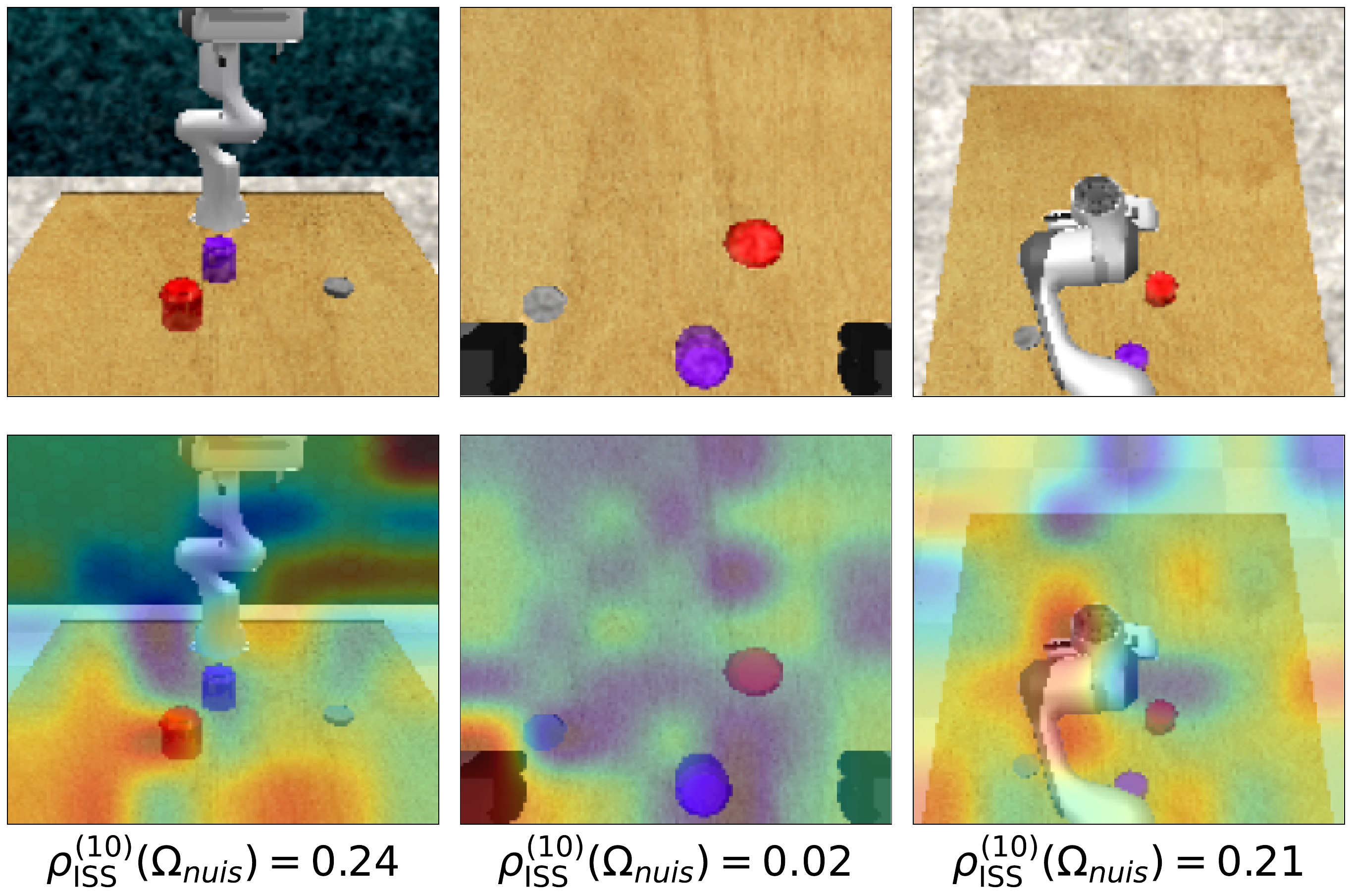} \\

        \small (a) Noise Perturbation &
        \small (b) Texture Perturbation \\[4pt]

        \includegraphics[width=0.48\linewidth]{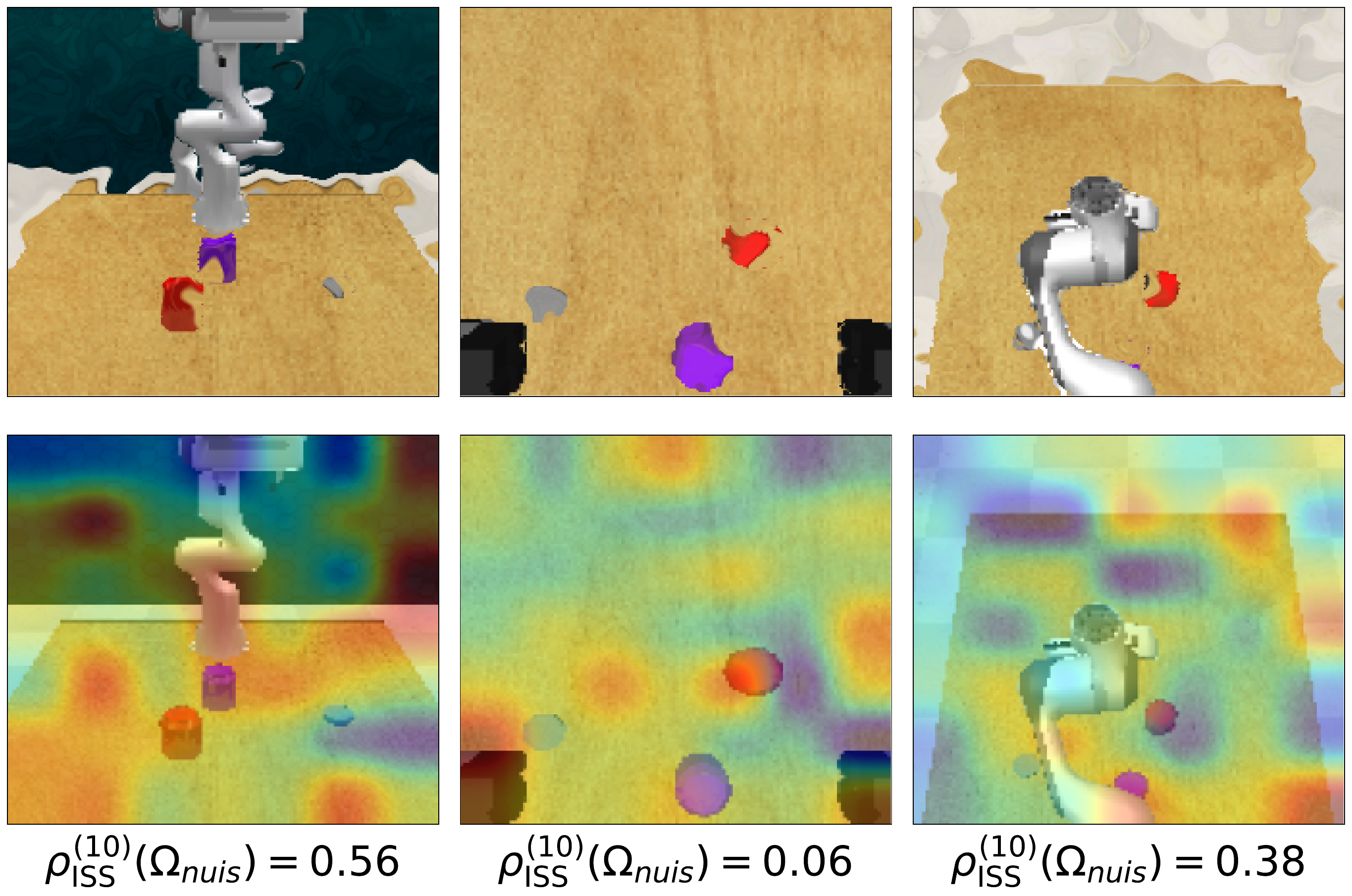} &
        \includegraphics[width=0.48\linewidth]{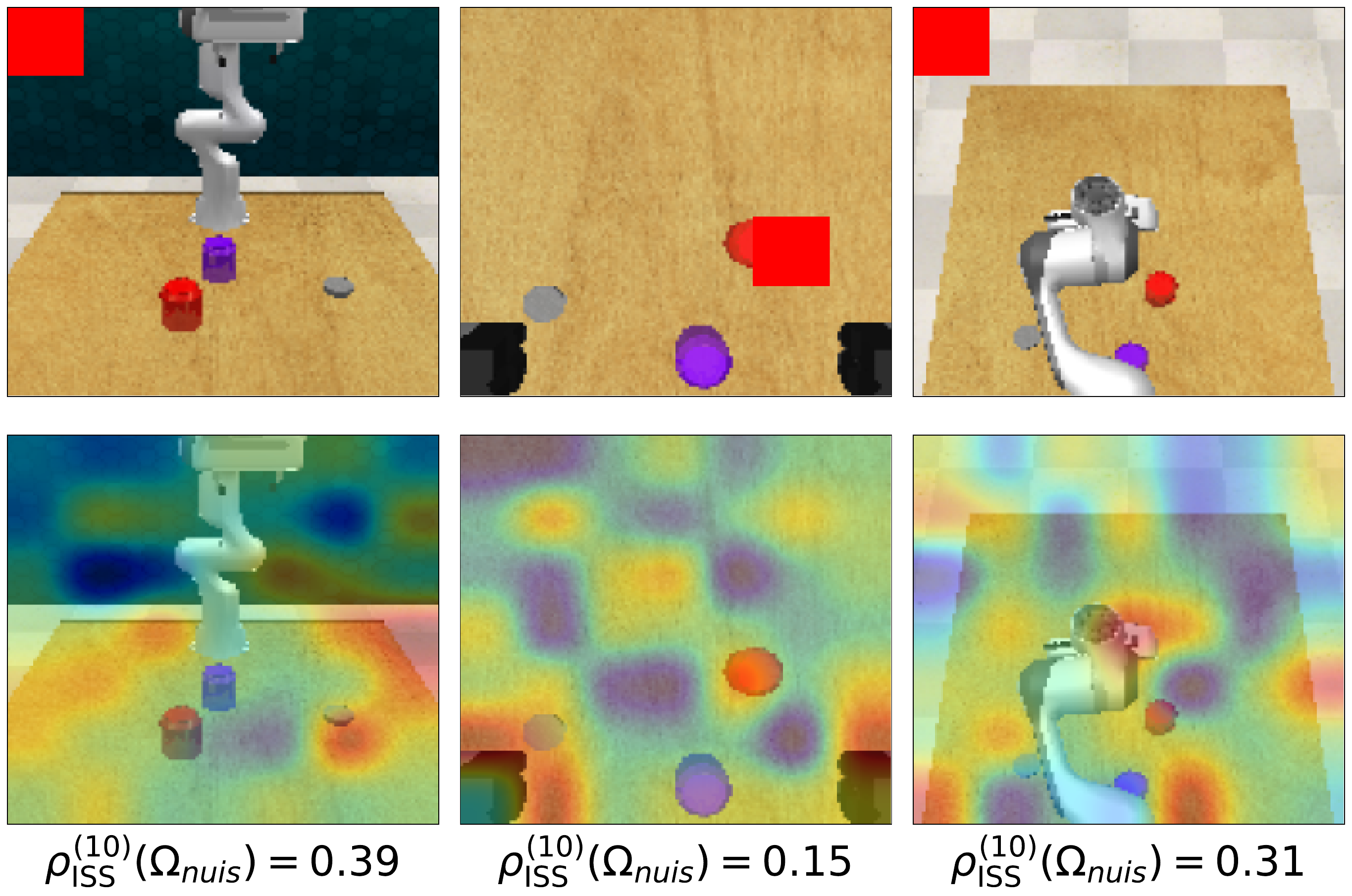} \\

        \small (c) Geometric Perturbation &
        \small (d) Patch Perturbation \\
    \end{tabular}
    \caption{\textbf{ISS Visualization under perturbations.}
    Saliency maps for the \emph{Close Jar} task are visualized under four interventions (Gaussian noise, texture, geometric, and patch) at a fixed timestep. We report the variations of the nuisance mass ratio (nmr@10) under different perturbations.}
    \label{fig:qualitative_2x2}
\end{figure*}

\textbf{Computation Efficiency.}
We present total wall-clock latency, including CPU mask generation and batched GPU inference, relative to single inference ($0.079\text{s}$ or $12.7\text{Hz}$) across $\text{N} \in [50, 200]$ with a stride of $50$ and $\text{p} \in [0.1, 0.5]$ with a stride of $0.1$. Under the optimal configuration ($N=100, p=0.3$), the model records a total latency of $5.18\text{s}$ and a throughput of $0.19\text{Hz}$ (Tab.~\ref{tab:efficiency_flat}, Appendix \ref{sec:computation_eff}).

Although more expensive than single-pass heuristics such as attention weights, the proposed Monte Carlo strategy is substantially more efficient than exhaustive causal intervention, making causal analysis feasible for high-dimensional VLAs. Concretely, for $3$ views with $16 \times 16$ tokens over $T$ steps, exhaustive intervention requires $\mathcal{O}(T \times 3 \times 16^2)$ forward passes, whereas our method reduces the cost to $\mathcal{O}(T \times N)$. Consequently, while the current throughput, for example $0.39\text{Hz}$ at $N=50$, is insufficient for high-frequency real-time control, it remains well-suited for offline safety verification and post hoc interpretability.

\textbf{Visualization.}
We present saliency maps (heatmaps) under four perturbation paradigms, including noise, texture, geometry, and patch, evaluated using the nuisance mass ratio at Top-10\% (nmr@10). The detailed implementations of these perturbations are provided in Appendix~\ref{app:perturbations}.

Figure~\ref{fig:qualitative_2x2} demonstrates that ISS concentrates saliency on manipulation-relevant components in the wrist view, maintaining low nuisance attribution with $\rho_{nuis}$ values of $0.09, 0.02, 0.06, 0.15$ under noise, texture, geometric, and patch perturbations, respectively. In contrast, the front view shifts focus toward robot links or background ($\rho_{nuis} \in \{0.29, 0.24, 0.56, 0.39\}$), while the overhead view exhibits diffuse activations across the workspace ($\rho_{nuis} \in \{0.35, 0.21, 0.38, 0.31\}$).

\section{Limitations and Future Work}

\textbf{Limitations.}
First, the current ISS formulation implicitly relies on a unimodal decision assumption, where action prediction error serves as a faithful surrogate for distributional divergence. In environments with multimodal action distributions or ambiguous long-horizon planning, different modes may respond heterogeneously to visual interventions, causing mean-based or point-estimate errors to collapse mode-specific causal effects, thereby invalidating the ISS attribution. Second, although Monte Carlo sampling substantially reduces the combinatorial cost compared to exhaustive enumeration, ISS still requires multiple forward or backward passes per state. This computational overhead is significantly higher than that of attention-based saliency methods, which yield explanations in a single inference, preventing ISS from being used for online diagnosis and limiting it to post hoc, offline analysis. Third, the accuracy of the NMR metric depends critically on the quality of nuisance masks. While simulation environments provide precise ground-truth masks, real-world deployment typically relies on instance segmentation models, which increases data-collection costs and makes segmentation errors and prompt design more sensitive, directly propagating into NMR estimation bias.

\textbf{Future Work.}
First, we aim to improve the computational efficiency of ISS by replacing Monte Carlo aggregation with low-variance estimators and local approximations, such as Hutchinson-style stochastic trace estimation to compress global token attribution into one to two backward passes, first-order interventional approximations that recast ISS as a single-pass Jacobian-based score, or hybrid schemes that move Monte Carlo sampling entirely offline while enabling deterministic, single-inference scoring online via learned gating mechanisms. Second, we plan to relax the unimodality assumption by extending ISS to multimodal settings, for example, by defining mode-aware or mixture-consistent ISS variants and by lifting error proxies from raw action space to latent action representations where multimodal structure is more explicitly encoded. Third, to reduce dependence on explicit masks, we will explore self-supervised or weakly supervised nuisance discovery that infers causal partitions directly from intervention responses, rather than relying on external segmentation models. Finally, we will conduct large-scale evaluations across diverse VLA architectures and benchmarks to assess the robustness, scalability, and generality of ISS and NMR beyond a single model family.

\section{Conclusion}
In this paper,
we introduced the Interventional Significance Score (ISS) to estimate the causal influence of visual regions on action predictions and the Nuisance Mass Ratio (NMR) to quantify reliance on task-irrelevant features.
Empirically, NMR tracks policy generalization under distribution shifts, as indicated by its strong negative correlation with task success rate on AGNOSTOS.
Together, these results establish interventional attribution as a principled diagnostic framework for evaluating causal alignment in embodied policies.

\clearpage
\section*{Impact Statement}
This paper introduces a diagnostic framework (ISS and NMR) for measuring the causal influence of visual features on decisions made by Vision–Language–Action (VLA) policies. By enabling the identification of reliance on spurious visual correlations and causal misalignment, our approach may support the development of more reliable and robust embodied systems, including robotics and autonomous agents, particularly in settings where safety and generalization under distribution shifts are critical.

At the same time, interpretability and causal diagnostics carry risks if they are over-trusted or misinterpreted as formal guarantees; attribution scores are approximate and do not certify the correctness of a model’s reasoning. There is also a risk of misuse if attribution tools are applied in high-stakes domains without appropriate human oversight, or used to justify opaque decision-making rather than improve it. Additionally, repeated interventional evaluations incur computational overhead, which may limit applicability in resource-constrained environments.

We therefore view ISS and NMR as diagnostic tools that should be used alongside complementary evaluation methods, uncertainty analyses, and human judgment, with careful consideration of the contexts in which they are applied.

\bibliographystyle{icml2026}
\bibliography{references}


\newpage
\appendix
\onecolumn

\section{Theoretical Derivations and Validity Constraints}
\label{app:theory}

\subsection{Stochastic Estimation of Interventional Significance}
\label{app_Stochastic_Estimation}

\textbf{Problem Formulation:}
Quantifying the interventional importance of a specific visual token $i$ theoretically requires evaluating the loss function $\mathcal{L}$ across the combinatorial space of all possible context configurations. Let $\mathbb{S} = \{0, 1\}^{M}$ denote the space of all possible binary masks of $\mathbf{X}$, where $M=n_1+n_2$ is defined in Eq.~(\ref{eq_input}). We model the "coalitional context" as a random vector $\mathbf{m} \in \mathbb{S}$ governed by the product measure $\boldsymbol{\mu}_p$, where each component $m_j \sim \text{Bernoulli}(p)$ represents the retention probability. Crucially, the loss term $\mathcal{L}(\mathbf{X}, \mathbf{m})$ measures the deviation of the policy’s predicted action from the ground truth action $\mathbf{a}^*$, evaluated under partial input occlusion using teacher forcing~\cite{williams1989learning}. We treat $\mathcal{L}(\mathbf{X}, \mathbf{m})$ as a deterministic functional of the masked input, with all stochasticity arising solely from the random mask sampling process.

\textbf{Theoretical Definition (Global Occlusion Sensitivity):}
We first define $\Psi_i(p)$, as the expected failure of the policy given that token $i$ is forcibly occluded ($m_i = 0$). This metric marginalizes over the combinatorial masking configurations of all other tokens:
\begin{equation}
    \label{eq:theoretical_risk}
    \Psi_i(p) = \mathbb{E}_{\mathbf{m} \sim \boldsymbol{\mu}_p} \left[ \mathcal{L}(\mathbf{X}, \mathbf{m}) \mid m_i = 0 \right].
\end{equation}
This definition captures the "absolute necessity" of token $i$: a high $\Psi_i$ indicates that the policy consistently deviates from the ground truth whenever $i$ is \textbf{absent}, regardless of the remaining context $\mathbf{m}_{\setminus i}$. Unlike contrastive causal effects, this absolute measure is robust to baseline performance shifts when used strictly for \textit{ranking} critical features. For a fixed task and masking distribution $\boldsymbol{\mu}_p$, ranking tokens by $\Psi_i(p)$ is equivalent to ranking by any affine-transformed contrastive score (e.g., $\Psi_i(p) - \mathbb{E}[\mathcal{L}]$), and therefore does not require explicit centering. Consequently, the Interventional Significance Score (ISS) for token $i$ is derived as the rise in risk relative to the baseline (or the limit as $p \to 1$), isolating the marginal contribution of the token.

\textbf{Derivation of the Unbiased Estimator:}
Direct computation of Eq.~(\ref{eq:theoretical_risk}) is intractable. We derive the estimator used in Algorithm~\ref{alg:st_iss} via Monte Carlo integration. Let $\{\mathbf{m}^{(k)}\}_{k=1}^N$ be $N$ i.i.d. samples drawn from $\boldsymbol{\mu}_p$.

A standard Monte Carlo approach might employ a Ratio Estimator, dividing the sum of losses by the empirical count of masks where $m_i=0$. However, Ratio Estimators are known to exhibit finite-sample bias and increased variance in practice, and are prone to numerical instability when the empirical count of $m_i=0$ is low or zero.

To ensure statistical rigor, we implement a fixed-denominator estimator. We define the estimator using the known theoretical probability mass of the occlusion event, $P(m_i=0) = 1-p$:
\begin{equation}
\label{eq:iss_estimator}
\widehat{\text{ISS}}_i(N, p) = \frac{1}{N(1-p)} \sum_{k=1}^{N} (1 - m_i^{(k)}) \cdot \mathcal{L}(\mathbf{X}, \mathbf{m}^{(k)})
\end{equation}
where $(1 - m_i^{(k)})$ acts as the indicator function $\mathbb{I}[m_i^{(k)}=0]$. This formulation explicitly avoids the numerical singularity associated with small denominators while ensuring unbiasedness.

\textbf{Proof of Unbiasedness:}
We prove that Eq.~(\ref{eq:iss_estimator}) is an unbiased estimator of the theoretical risk $\Psi_i(p)$. Taking the expectation of the estimator:
\begin{equation}
\begin{aligned}
\mathbb{E}[\widehat{\text{ISS}}_i] &= \frac{1}{N(1-p)} \sum_{k=1}^{N} \mathbb{E}\left[ (1 - m_i^{(k)}) \mathcal{L}(\mathbf{X}, \mathbf{m}^{(k)}) \right] \\
&= \frac{1}{N(1-p)} \cdot N \cdot P(m_i=0) \cdot \mathbb{E}[\mathcal{L} \mid m_i=0] \\
&= \frac{N(1-p)}{N(1-p)} \Psi_i(p) = \Psi_i(p)
\end{aligned}
\end{equation}
Thus, our method provides a mathematically rigorous estimate of the global sensitivity without the finite-sample bias inherent in Ratio Estimators.

\textbf{Probabilistic Interpretation of the Measure Space:}
It is critical to note that Eq.~(\ref{eq:iss_estimator}) approximates the expectation over the \textit{weighted} subspace defined by $\boldsymbol{\mu}_p$, rather than a uniform summation over the power set $2^{M}$. This focus effectively prioritizes likely coalitional structures governed by the sparsity parameter $p$. Using the fixed probability mass $(1-p)$ in the denominator, the estimator correctly scales the accumulated contributions to the density of the sampled subspace.

\subsection{Action MSE as a Proxy for KL Divergence}
\label{app_KL_Divergence}

\textbf{Continuous Policy Formulation.} We model the VLA policy $\pi_\theta(\mathbf{a}|\mathbf{X})$ as a continuous probability density function over the kinematic control space $\mathcal{A}$. Specifically, we adopt the standard assumption in regression-based imitation learning that the policy output follows a multivariate Gaussian distribution with a parameterized mean $\boldsymbol{\mu}_\theta(\mathbf{X})$ and a fixed, isotropic covariance matrix $\Sigma = \sigma^2 \mathbf{I}$:
\begin{equation}
    \pi_\theta(\mathbf{a}|\mathbf{X}) = \mathcal{N}(\mathbf{a}; \boldsymbol{\mu}_\theta(\mathbf{X}), \sigma^2 \mathbf{I})
\end{equation}
This formulation satisfies the prerequisite for applying the Fisher Information metric in unbounded continuous spaces.

\textbf{Derivation of the MSE-KL Equivalence.} We aim to quantify the divergence between the original policy distribution $\pi(\mathbf{a}|\mathbf{X})$ and the perturbed policy distribution $\pi(\mathbf{a}|\mathbf{X}+\delta)$, where $\delta$ represents a nuisance perturbation injected into the observation space $\mathcal{O}$. The Kullback-Leibler (KL) divergence between two multivariate Gaussian distributions $\mathcal{P} = \mathcal{N}(\boldsymbol{\mu}_1, \Sigma)$ and $\mathcal{Q} = \mathcal{N}(\boldsymbol{\mu}_2, \Sigma)$ is analytically defined as:
\begin{equation}
    D_{\text{KL}}(\mathcal{P} || \mathcal{Q}) = \frac{1}{2} \left[ (\boldsymbol{\mu}_1 - \boldsymbol{\mu}_2)^\top \Sigma^{-1} (\boldsymbol{\mu}_1 - \boldsymbol{\mu}_2) + \mathrm{tr}(\Sigma^{-1}\Sigma) - d + \ln\frac{|\Sigma|}{|\Sigma|} \right]
\end{equation}
Under the assumption of homoscedasticity utilized in our setup, the trace ($\mathrm{tr}(\Sigma^{-1}\Sigma) = d$) and logarithmic terms cancel out. Substituting $\Sigma = \sigma^2 \mathbf{I}$ simplifies the equation to:
\begin{equation}
    D_{\text{KL}}(\pi(\mathbf{a}|\mathbf{X}) || \pi(\mathbf{a}|\mathbf{X}+\delta)) = \frac{1}{2\sigma^2} \|\boldsymbol{\mu}_\theta(\mathbf{X}) - \boldsymbol{\mu}_\theta(\mathbf{X}+\delta)\|_2^2 \propto \text{MSE}
\end{equation}
Under the assumed density regime, this derivation demonstrates that minimizing the mean squared error (MSE) between the predicted action means is theoretically equivalent to minimizing the KL divergence in the probabilistic log-likelihood space.

\textbf{Geometric Interpretation via Fisher Information.} From the perspective of Information Geometry~\cite{amari2016information}, the Fisher Information Matrix (FIM) $\mathcal{I}(\theta)$ defines the local Riemannian metric of the statistical manifold. For the defined Gaussian family, the FIM with respect to the mean parameters corresponds to the inverse covariance matrix $\mathcal{I} = \Sigma^{-1}$.
Since the action space $\mathcal{A}$ undergoes statistical whitening, the control dimensions are effectively decorrelated, justifying the local Euclidean approximation $\mathcal{I} \approx \mathbf{I}$. Consequently, the Euclidean distance (MSE) in the action space serves as a faithful \textbf{first-order consistency proxy} for the distributional shift.

\textbf{Regime of Validity and Limitations.} We strictly define the validity of this proxy under the \textbf{Unimodal Trajectory Assumption}. A VLA task may admit multiple episodes with varying initial object positions and different trajectories, while the policy remains concentrated around a single dominant action trend without branching behaviors, as shown in Figure~\ref{fig:trajectory_show}. The equivalence $D_{\text{KL}} \propto \text{MSE}$ holds when the policy approximates a single mode of behavior.
We acknowledge that in scenarios involving high ambiguity or multi-modal global planning, a Gaussian approximation may average conflicting modes, rendering MSE an insufficient metric. However, for the specific purpose of assessing robustness against non-causal visual nuisances in deterministic execution phases, this metric remains both mathematically sound and computationally efficient.


\section{Computation Efficiency Results}
\label{sec:computation_eff}

We demonstrate the full computational efficiency results of ISS, as shown in Tab. \ref{tab:efficiency_flat}. The computational cost scales linearly with the sample size $N$, where the latency increases from $2.59\text{s}$ ($0.39\text{Hz}$) at $N=50$ to $10.35\text{s}$ ($0.10\text{Hz}$) at $N=200$, corresponding to a slowdown factor rising from $33.0\times$ to $131.7\times$.
In contrast, these metrics remain invariant to the mask ratio $p$ across the interval $[0.1, 0.5]$.

\begin{table}[ht]
\centering
\caption{Inference latency (s), throughput (Hz), and slowdown factor ($\times$) relative to single inference (0.079\,s / 12.7\,Hz). The results demonstrate that computational cost scales linearly with $N$ while remaining invariant to the mask ratio $p$.}
\label{tab:efficiency_flat}

\resizebox{\linewidth}{!}{
\begin{tabular}{l | ccc | ccc | ccc | ccc}
\toprule
\multirow{2}{*}{\textbf{Ratio ($p$)}} & \multicolumn{3}{c|}{\textbf{N = 50}} & \multicolumn{3}{c|}{\textbf{N = 100}} & \multicolumn{3}{c|}{\textbf{N = 150}} & \multicolumn{3}{c}{\textbf{N = 200}} \\
 & Lat (s) & Hz & Slow & Lat (s) & Hz & Slow & Lat (s) & Hz & Slow & Lat (s) & Hz & Slow \\
\midrule
0.1 & 2.59 & 0.39 & 33.0 & 5.04 & 0.20 & 64.1 & 7.67 & 0.13 & 97.5 & 10.34 & 0.10 & 131.5 \\
0.2 & 2.59 & 0.39 & 33.0 & 5.18 & 0.19 & 65.8 & 7.77 & 0.13 & 98.8 & 10.39 & 0.10 & 132.1 \\
0.3 & 2.59 & 0.39 & 33.0 & 5.18 & 0.19 & 65.8 & 7.77 & 0.13 & 98.8 & 10.36 & 0.10 & 131.8 \\
0.4 & 2.61 & 0.38 & 33.2 & 5.19 & 0.19 & 66.0 & 7.76 & 0.13 & 98.8 & 10.35 & 0.10 & 131.7 \\
0.5 & 2.59 & 0.39 & 33.0 & 5.20 & 0.19 & 66.2 & 7.76 & 0.13 & 98.7 & 10.35 & 0.10 & 131.7 \\
\bottomrule
\end{tabular}
}
\end{table}

\section{Training Details and Results (VLA model $\pi_{0.5}$ )}
\label{app:training_details}

\subsection{Training Description}
\textbf{Training Data.}
We use the AGNOSTOS dataset for training and evaluation, comprising 41 manipulation tasks. The training split consists of 18 seen tasks, each with 200 episodes, for a total of 3,600 episodes (approximately 140 GB) distributed across five files. Evaluation is conducted on 23 unseen tasks, with 25 episodes per task, totaling 575 episodes (20.2 GB) provided in a single file for cross-task generalization testing. Each episode is annotated with four natural-language prompts for VLA training. All data are generated in RLBench \cite{james2020rlbench} with a simulation timestep of 50 ms. For VLA model learning, the dataset is converted to LeRobot format \cite{cadene2024lerobot} to ensure compatibility with the training pipeline, using RGB observations from the \texttt{front\_rgb}, \texttt{overhead\_rgb}, and \texttt{wrist\_rgb} streams. All episodes are replayed in RLBench to obtain frame-level segmentation masks, as illustrated in Figure~\ref{fig:mask}. For each frame, the simulator directly segments three regions, corresponding to $\rho_{act}$, $\rho_{sup}$, and $\rho_{nuis}$, and saves the resulting masks as \texttt{front\_mask}, \texttt{overhead\_mask}, and \texttt{wrist\_mask}.

\begin{figure*}[ht]
  \centering
  \begin{subfigure}{0.24\textwidth}
    \centering
    \includegraphics[width=\textwidth]{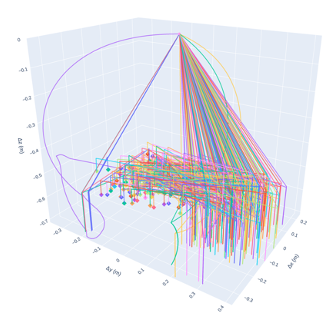}
    \caption{\textit{close jar}}
  \end{subfigure}
  \hfill
  \begin{subfigure}{0.24\textwidth}
    \centering
    \includegraphics[width=\textwidth]{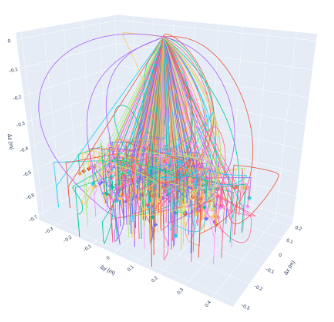}
    \caption{\textit{insert onto square peg}}
  \end{subfigure}
  \hfill
  \begin{subfigure}{0.24\textwidth}
    \centering
    \includegraphics[width=\textwidth]{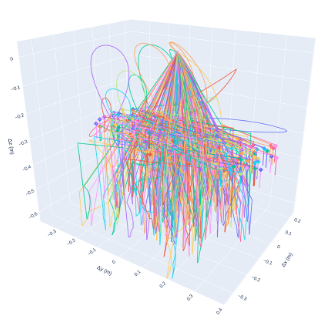}
    \caption{\textit{light bulb in}}
  \end{subfigure}
  \hfill
  \begin{subfigure}{0.24\textwidth}
    \centering
    \includegraphics[width=\textwidth]{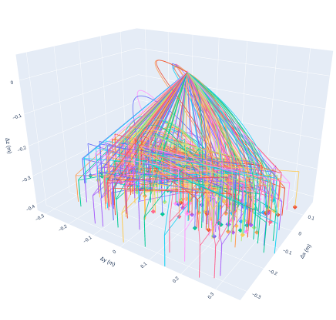}
    \caption{\textit{meat off grill}}
  \end{subfigure}
    \caption{\textbf{Visualization of Episode Trajectories.} Each plot represents the aggregated 3D spatial paths of the end-effector recorded across all episodes. The subfigures (a) through (d) illustrate four distinct manipulation tasks in the AGNOSTOS dataset.}
  \label{fig:trajectory_show}
\end{figure*}

\vspace{-0.5em}
\begin{figure}[!h]
  \centering
  \includegraphics[width=\textwidth]{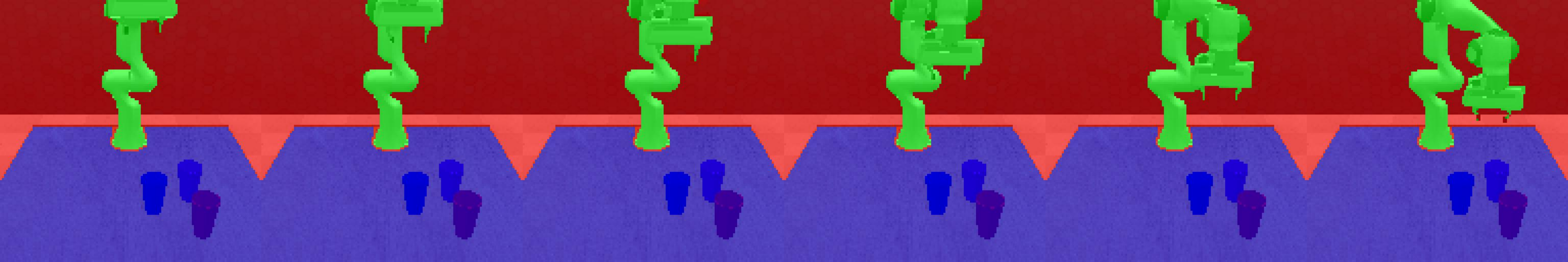}
  \caption{
  Segmentation masks for the \textit{``close jar''} task (episode 0), showing 6 frames from the front view.
  Green indicates action-relevant regions ($\Omega_{act}$), blue indicates task-support regions ($\Omega_{sup}$), and red indicates nuisance regions ($\Omega_{nuis}$).
  }
  \label{fig:mask}
\end{figure}

\textbf{Fine-tuning.} Implemented in the JAX framework, we fine-tuned the models using supervised fine-tuning (SFT) for 20 epochs on a single 96\,GB NVIDIA RTX 6000 GPU with a batch size of 64. We adopted Low-Rank Adaptation (LoRA) targeting both attention and feed-forward network (FFN) layers, utilizing distinct rank configurations adapted to the model scale: $r=16$ for the vision-language model expert (Gemma 2B) and $r=32$ for the action expert (Gemma 300M). We explicitly capped GPU memory usage at approximately 90\% of the available VRAM (about 90\,GB), under which the full training process took around 2 hours, corresponding to a total of 1{,}000 optimization steps. All experiments are conducted using \texttt{bfloat16} precision.

The model is initialized from a pre-trained checkpoint \textit{pi05\_base} with the \textit{Franka Emika Panda robotic arm} asset and three camera views (front, overhead, and wrist). The VLA $\pi_{0.5}$ model takes the RGB observations $x \in \mathbb{R}^{H \times W \times 3}$ from three camera views (front, overhead, and wrist) as input, with all images and prompts sourced from the AGNOSTOS dataset \cite{zhou2025exploring}. Figure~\ref{fig:optimization_dynamics} reports the training dynamics of $\pi_{0.5}$ over 1,000 optimization steps. The training loss decreases sharply from approximately 1.6 to below 0.2 within the first 100 steps and continues to gradually converge, reaching around 0.1 by the end of fine-tuning. Meanwhile, the parameter $\ell_2$ norm increases smoothly from approximately 1803.87 to 1803.92 over the course of training, without abrupt spikes or oscillations, indicating controlled parameter updates throughout optimization.

\begin{figure}[ht]
    \centering
    \begin{subfigure}[t]{0.48\linewidth}
        \centering
        \includegraphics[width=\linewidth]{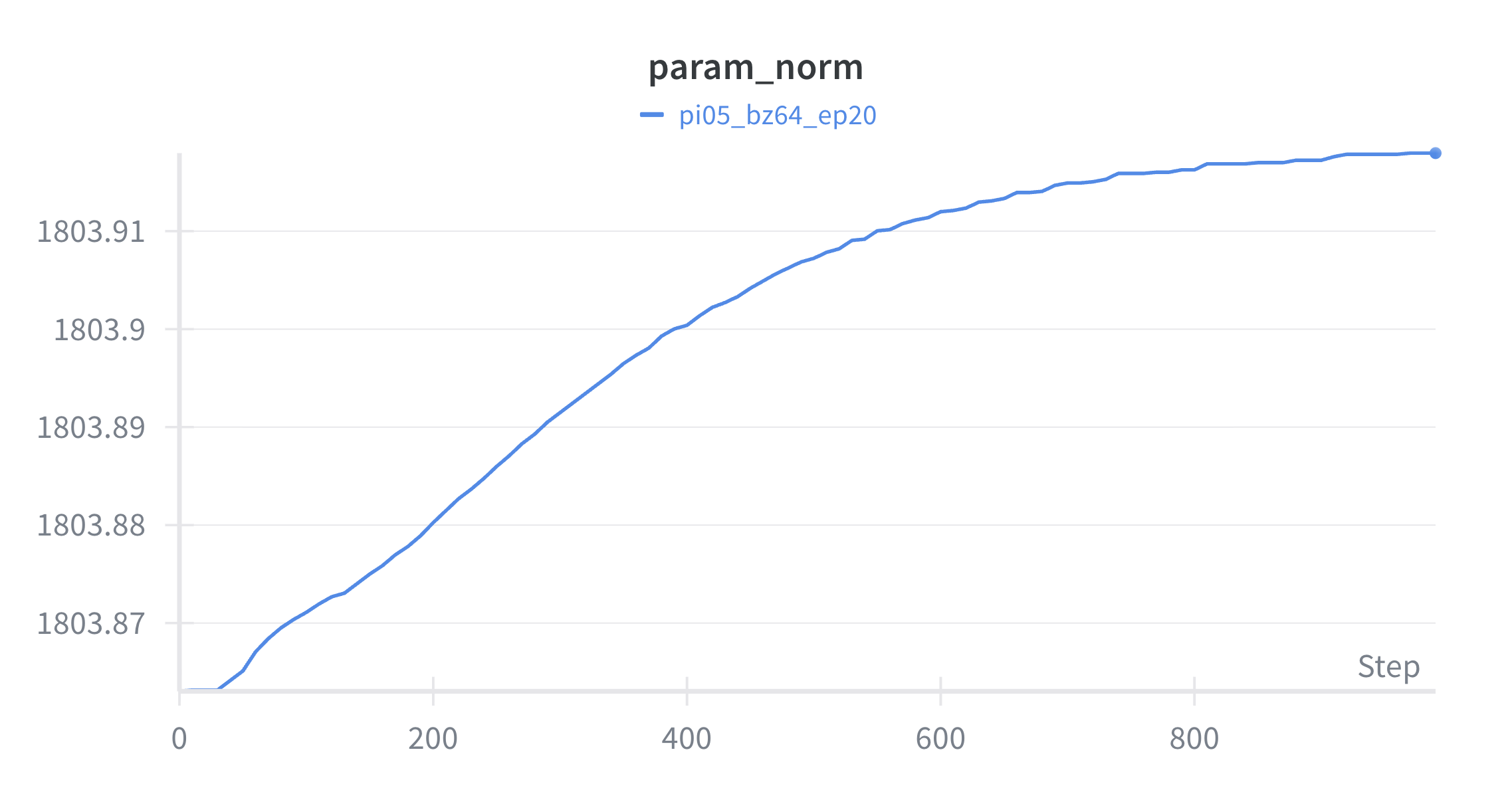}
        \caption{\textbf{Parameter norm.}
        X-axis: training steps.
        Y-axis: $\ell_2$ norm of model parameters.}
    \end{subfigure}
    \hfill
    \begin{subfigure}[t]{0.48\linewidth}
        \centering
        \includegraphics[width=\linewidth]{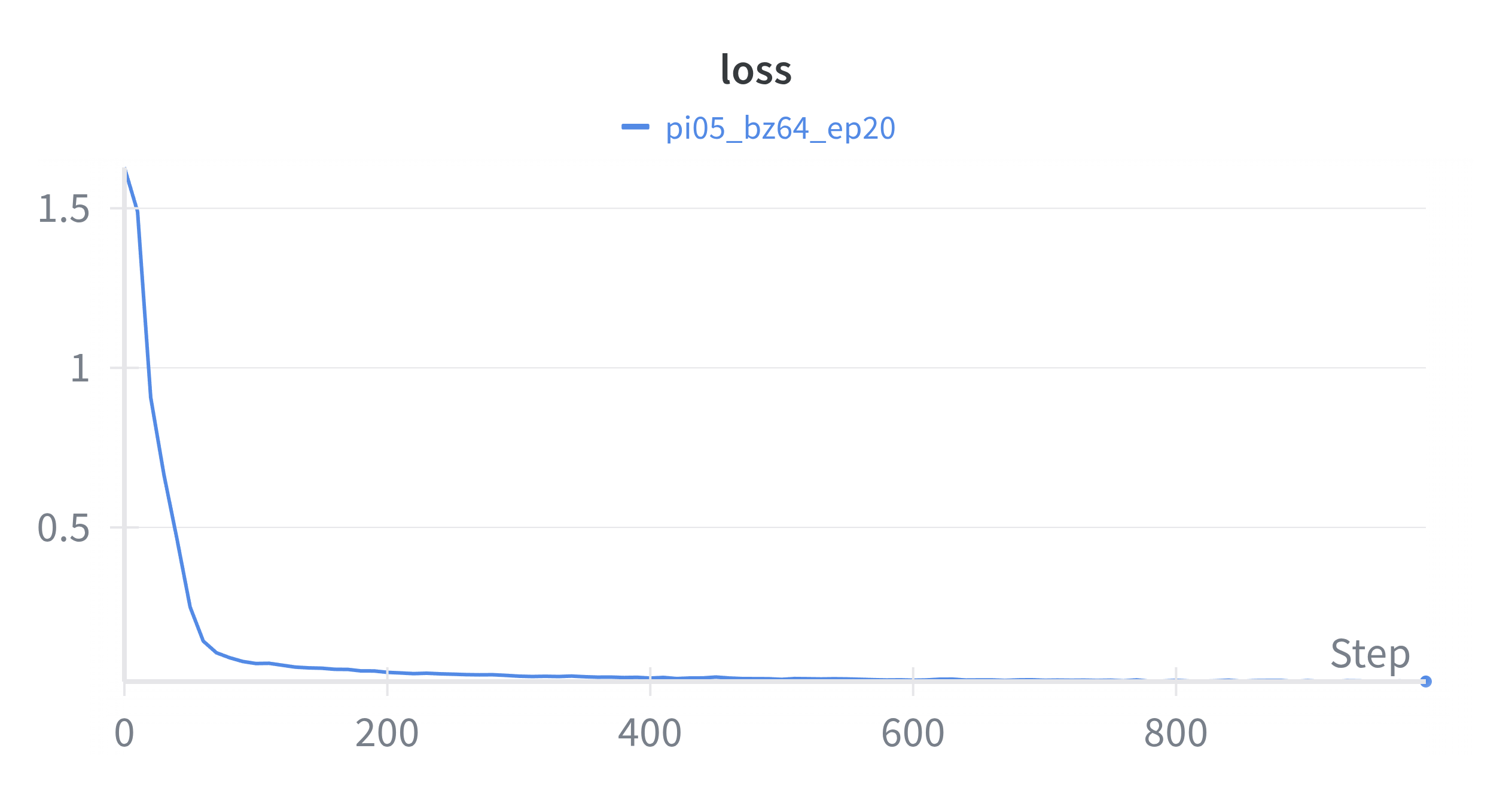}
        \caption{\textbf{Training loss.}
        X-axis: training steps.
        Y-axis: objective loss value.}
    \end{subfigure}
    \caption{Optimization dynamics during VLA $\pi_{0.5}$ training.}
    \label{fig:optimization_dynamics}
\end{figure}

\subsection{Implementation and Evaluation.}

\textbf{Implementation.} We represent actions as continuous 8-DoF control signals.
VLA $\pi_{0.5}$ directly models continuous-valued actions via a flow-matching formulation, without discretizing the action space or requiring post-hoc decoding.
The control frequency is set to 20\,Hz (0.05\,s per step), while policy inference is performed every 8 control steps, corresponding to an inference frequency of 2.5\,Hz.
At each inference step, the model predicts a fixed-length chunk of 16 future 8-DoF actions, where the first seven dimensions correspond to continuous joint angles and the last dimension encodes the gripper open/close command.
During execution, we run the environment at 20\,Hz and refresh observations at every control step.
Policy inference is triggered every 8 control steps; when invoked, the model predicts a 16-step action chunk, from which we execute only the first 8 actions sequentially over the next 8 control steps before the next inference.
After completing an episode, the resulting per-step rollouts are rendered into a video.

\textbf{Online Evaluation.} We evaluate $\pi_{0.5}$ across 5 distinct random seeds (0, 13, 50, 77, 99) covering 41 diverse manipulation tasks.
Each task is tested with 25 independent trials in newly generated scenes, whose object spatial configurations differ from those in the fixed training dataset.
Quantitative results in Tab. \ref{tab:pi05_all_success}, determined by strict sensor-based success conditions (e.g., proximity checks in \texttt{DetectedCondition}), show that the policy achieves a mean success rate of $59.07 \pm 0.58\%$ on seen tasks, with 100\% success on primitives such as \textit{``meat off grill''} across all initialization seeds.
For zero-shot generalization to unseen tasks absent from the training set, the agent attains an average success rate of $24.00 \pm 1.92\%$, specifically achieving $28.74 \pm 1.97\%$ on Level~1 tasks with similar semantics and $16.96 \pm 1.21\%$ on structurally novel Level~2 tasks. Figure~\ref{fig:episode_montage} shows representative successful and failed episodes on seen and unseen tasks.


\section{Detailed Results for ISS and NMR Evaluation}
\label{app:detailed_results_iss_nmr}

\subsection{Quantitative Results}
\label{app:quantitative_result}

\textbf{Top-$k$ Sensitivity Results.} For top-$k$ computation, we flatten the 2D ISS heatmap to extract indices of the highest-scoring pixels and compute the intersection ratio via Equation \ref{eq:NMR}. Figure \ref{fig:appendix_nmr_k} illustrates the correlation analysis across sensitivity thresholds $k \in \{1, 5, 10, 15, 20\}$. The results reveal a robust negative correlation, where higher causal attribution to nuisance features predicts lower task success rates. Since the strongest signal occurs at $k=10$ with $r=-0.77$, we adopt nmr@10 as the default metric for subsequent experiments, noting that extreme sparsity at $k=1$ or excessive dilution at $k=20$ slightly attenuates this correlation.

\begin{figure}[ht]
    \centering
    \begin{subfigure}[t]{0.48\linewidth}
        \centering
        \includegraphics[width=\linewidth]{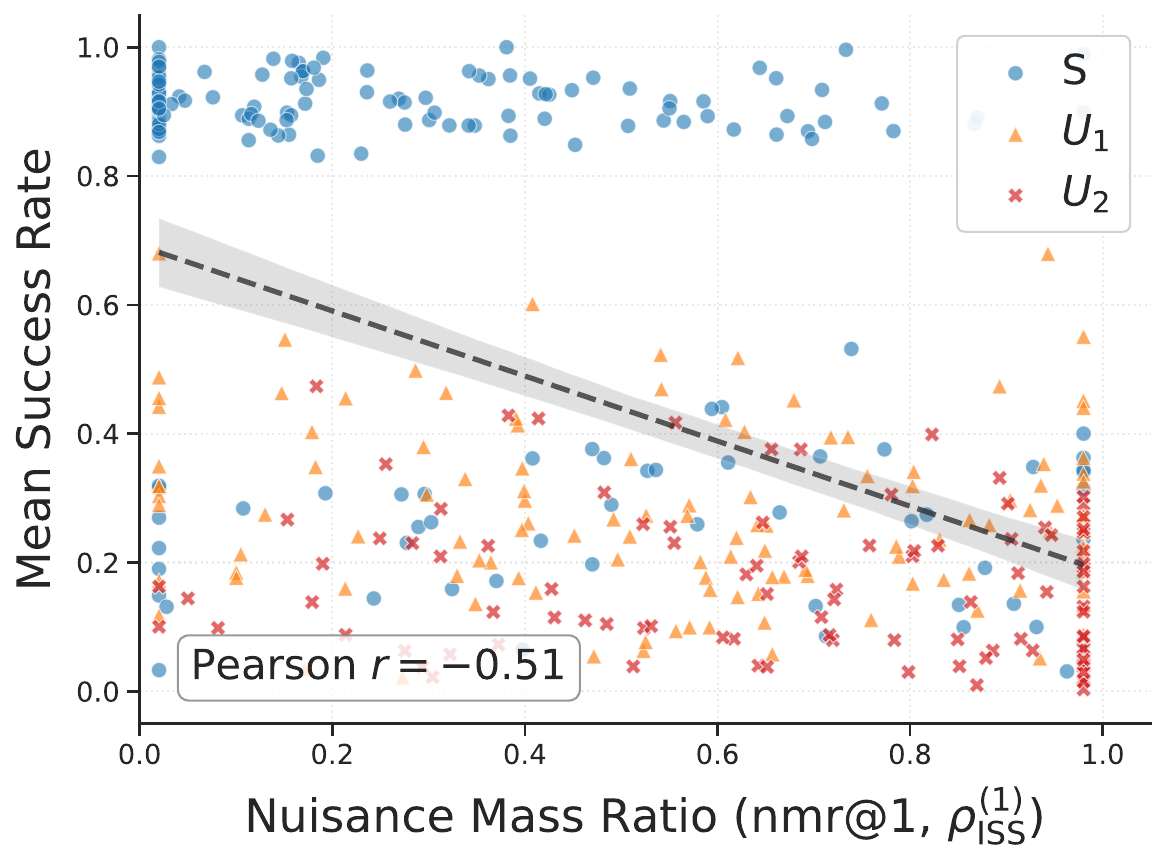}
        \caption{\textbf{nmr@1.}}
    \end{subfigure}
    \hfill
    \begin{subfigure}[t]{0.48\linewidth}
        \centering
        \includegraphics[width=\linewidth]{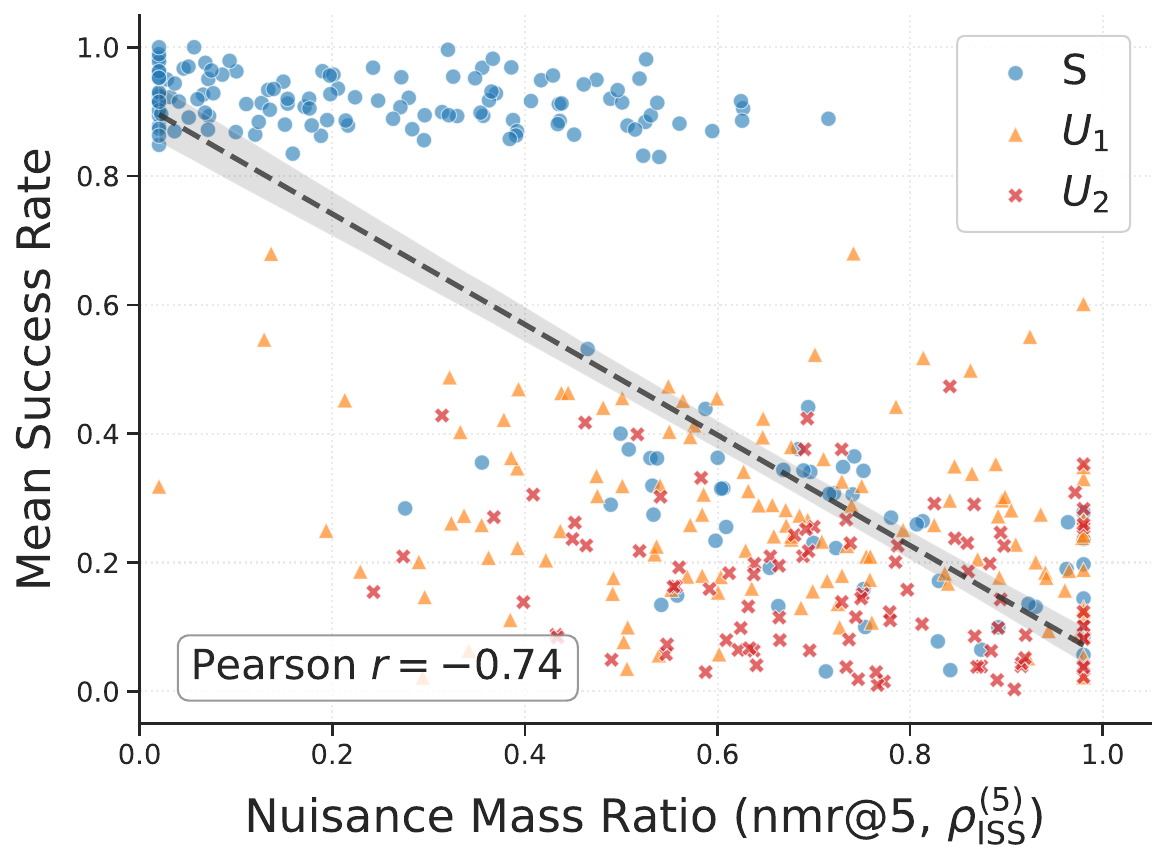}
        \caption{\textbf{nmr@5.}}
    \end{subfigure}

    \vspace{0.3em}

    \begin{subfigure}[t]{0.48\linewidth}
        \centering
        \includegraphics[width=\linewidth]{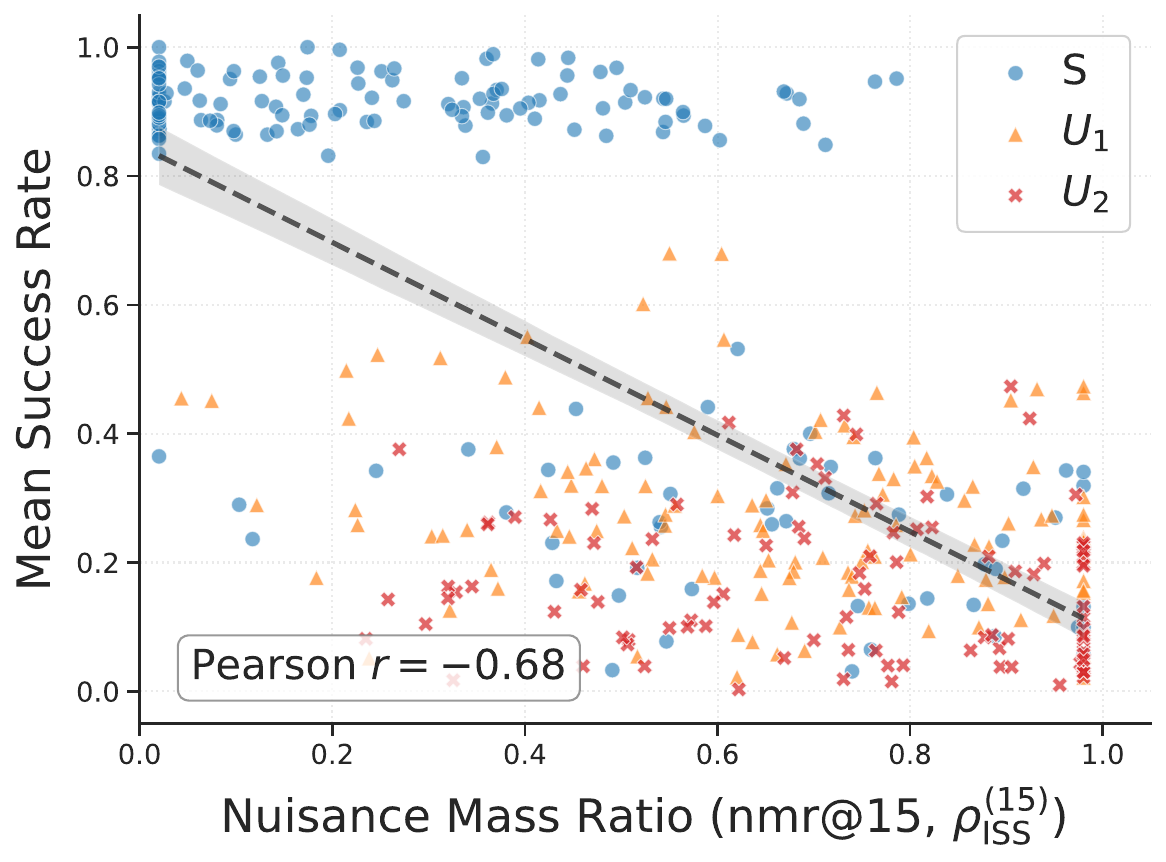}
        \caption{\textbf{nmr@15.}}
    \end{subfigure}
    \hfill
    \begin{subfigure}[t]{0.48\linewidth}
        \centering
        \includegraphics[width=\linewidth]{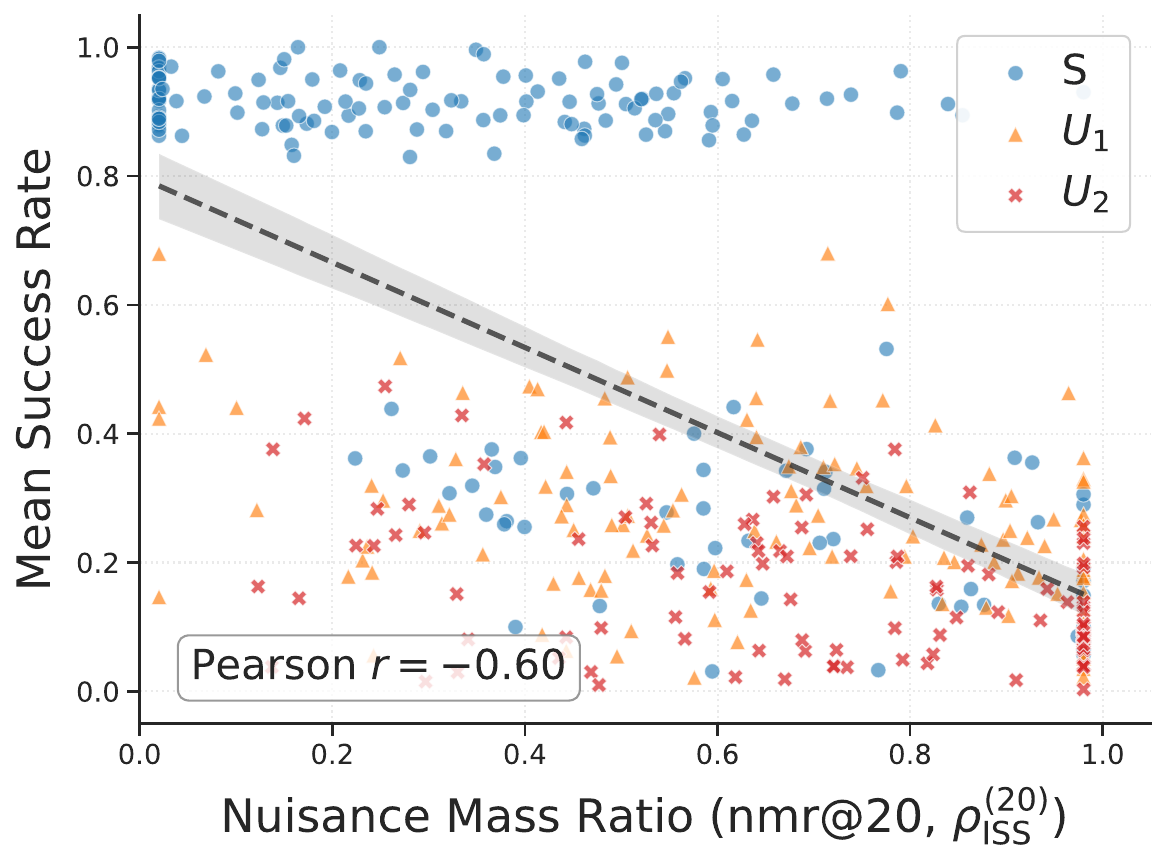}
        \caption{\textbf{nmr@20.}}
    \end{subfigure}

    \caption{
    Correlation between nuisance mass ratio (nmr@k) and task success rate under different cutoff values $k$.
    The case $k{=}10$ is omitted here as it yields the strongest correlation and is presented in the main paper.
    }
    \label{fig:appendix_nmr_k}
\end{figure}

\textbf{Fidelity Results.} Figure \ref{fig:appendix_correlation_grid} illustrates the fidelity analysis stratified by \textit{Seen} and \textit{Unseen} task splits, presenting detailed scatter plots that visualize the correlation between saliency changes and action deviations across three hard (structured) interventions. We benchmark ISS against saliency maps derived from attention scores (ATT) and token norms (NORM) under identical metric protocols. Scatter plots confirm that ISS maintains superior linearity across all splits. Under the \textit{Unseen} geometric (GEO) intervention, ISS achieves a Pearson coefficient of $r=0.79$, significantly outperforming ATT ($r=0.60$) and NORM ($r=0.48$). This provides empirical evidence that our proposed ISS faithfully captures the policy's reliance on geometric cues, thereby ensuring superior fidelity compared to magnitude-based baselines.

\begin{figure*}[t]
    \centering
    \setlength{\tabcolsep}{4pt}

    \begin{tabular}{c c c c c}
        &
        \textbf{Summary} &
        \textbf{TEXTURE} &
        \textbf{GEO} &
        \textbf{PATCH} \\

        \rotatebox{90}{\textbf{Seen}} &
        \includegraphics[width=0.22\linewidth]{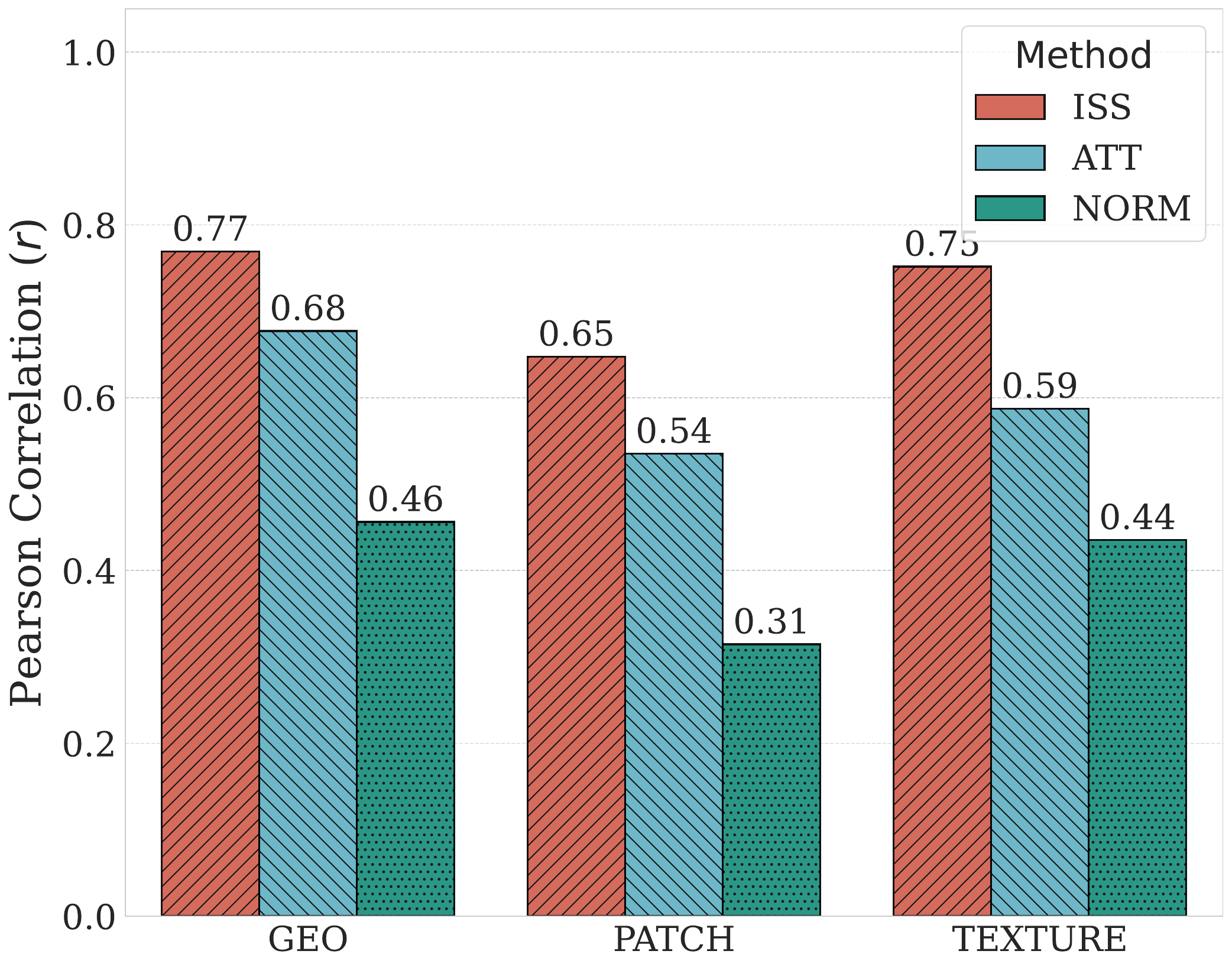} &
        \includegraphics[width=0.22\linewidth]{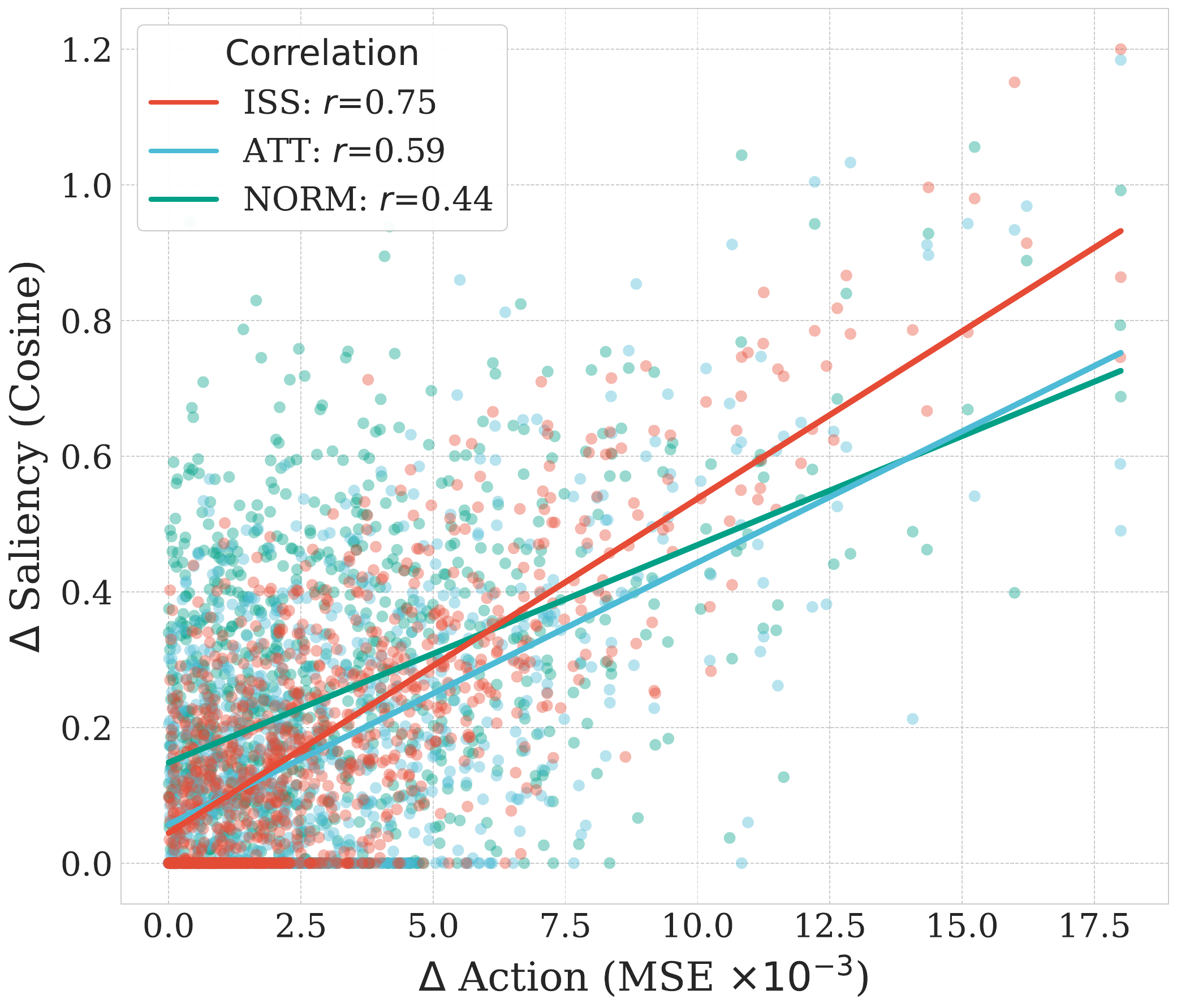} &
        \includegraphics[width=0.22\linewidth]{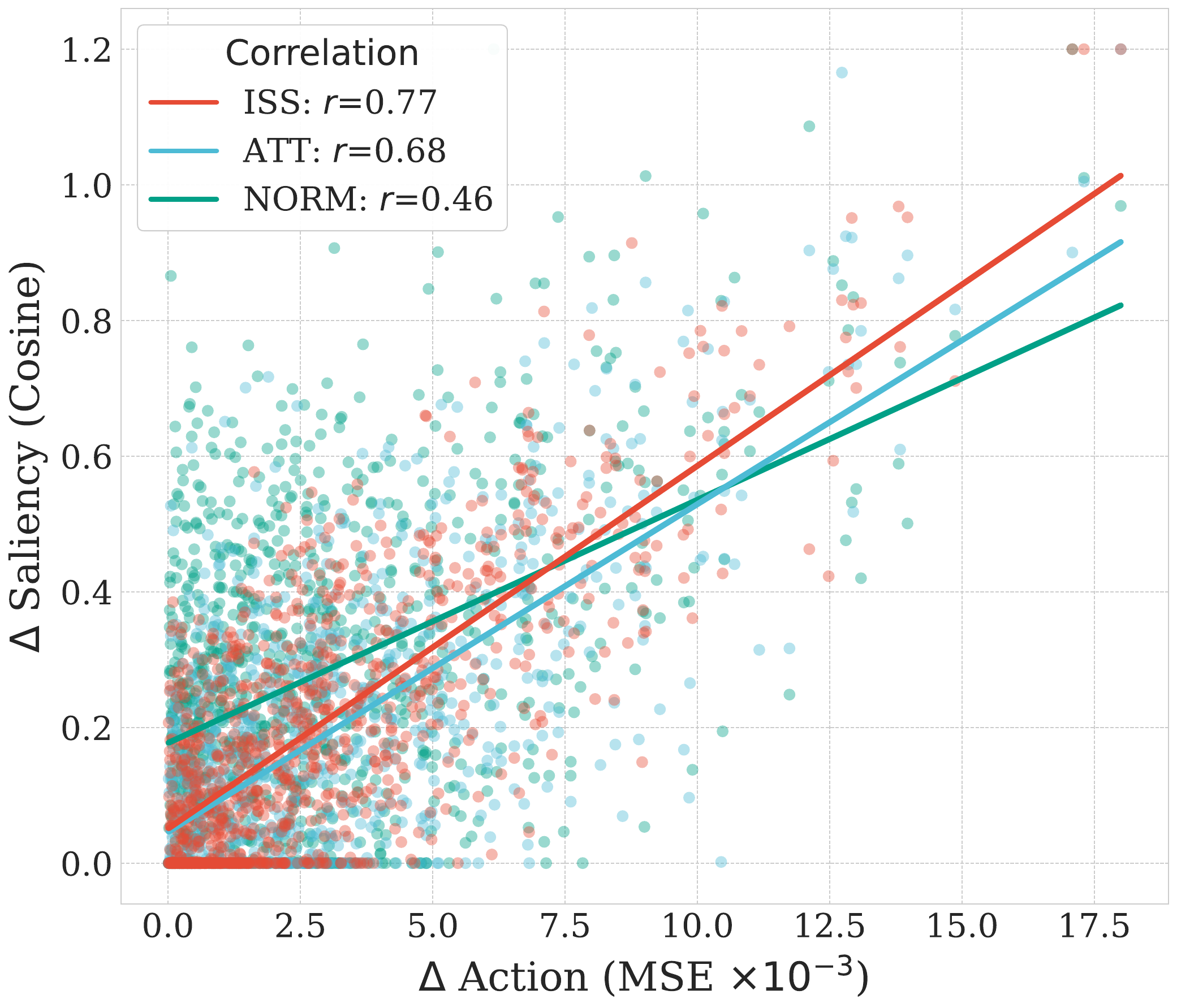} &
        \includegraphics[width=0.22\linewidth]{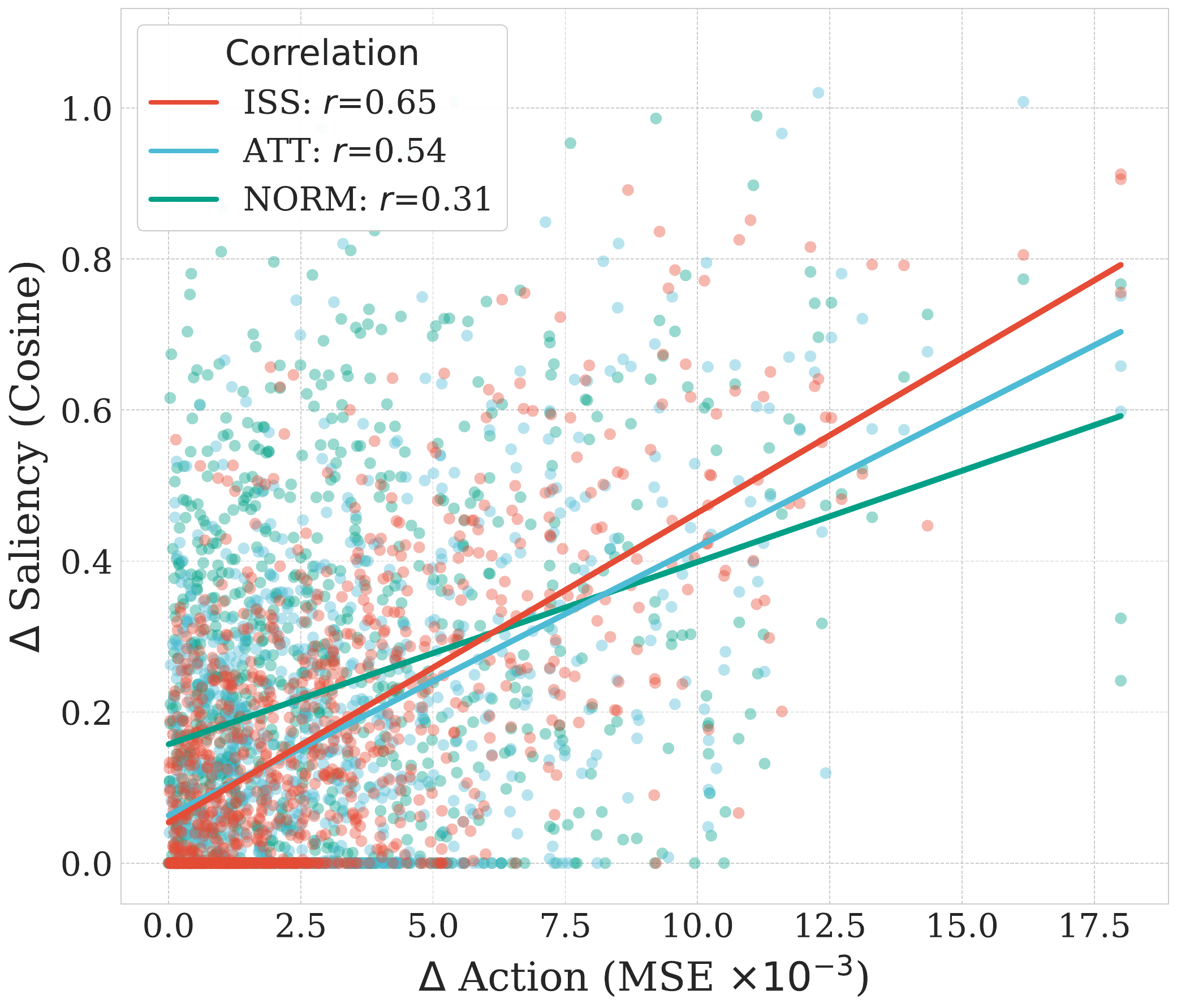} \\

        \rotatebox{90}{\textbf{Unseen}} &
        \includegraphics[width=0.22\linewidth]{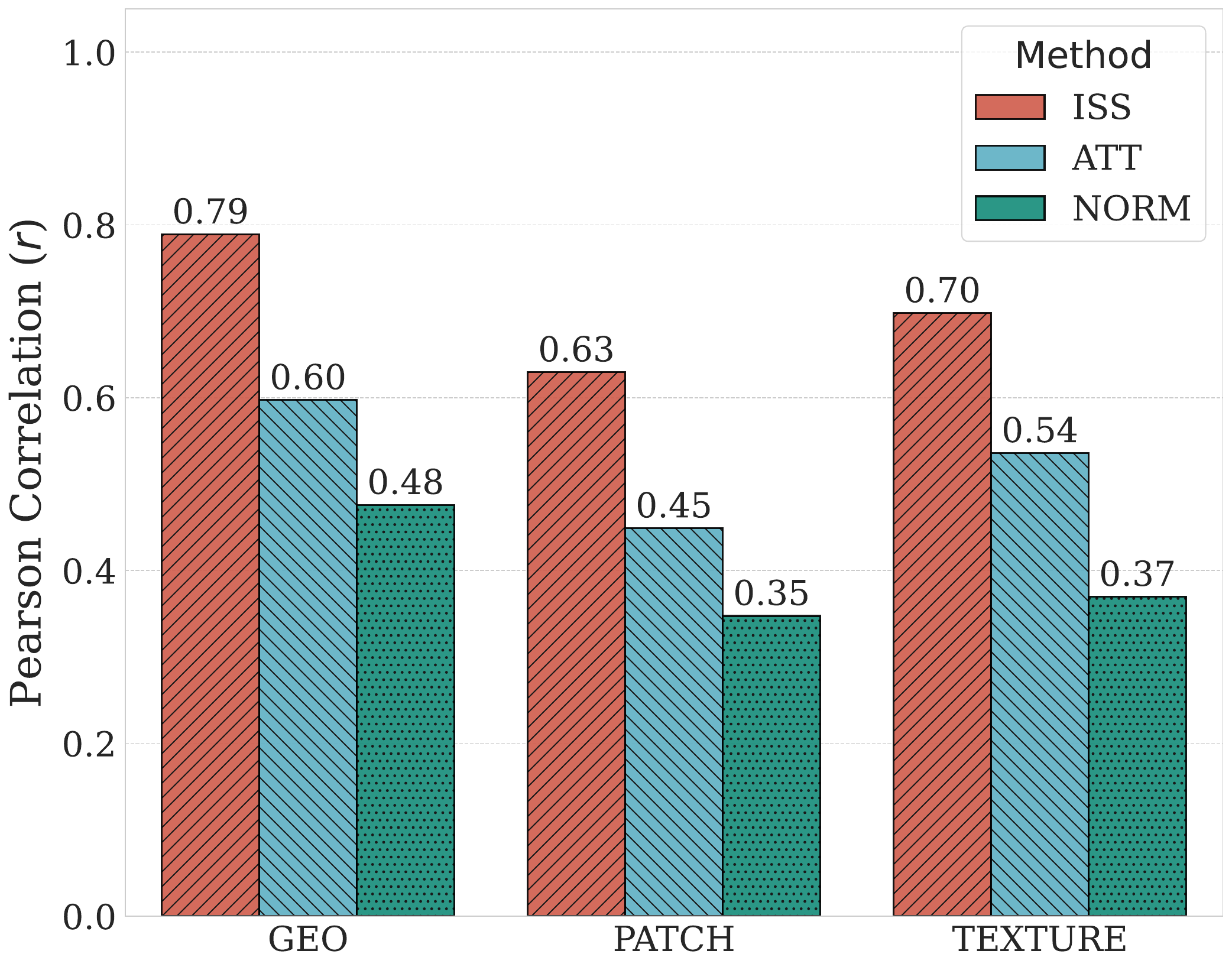} &
        \includegraphics[width=0.22\linewidth]{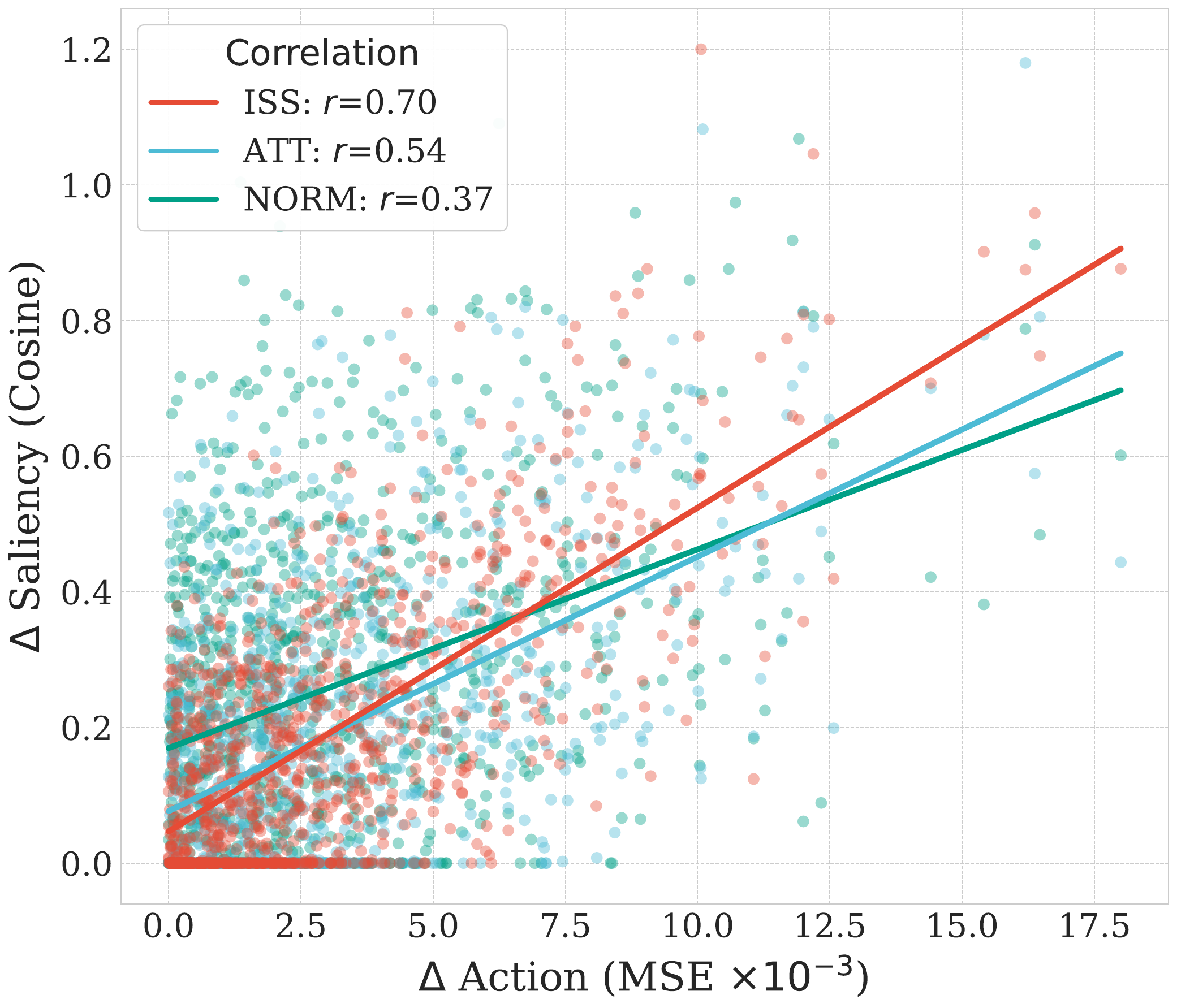} &
        \includegraphics[width=0.22\linewidth]{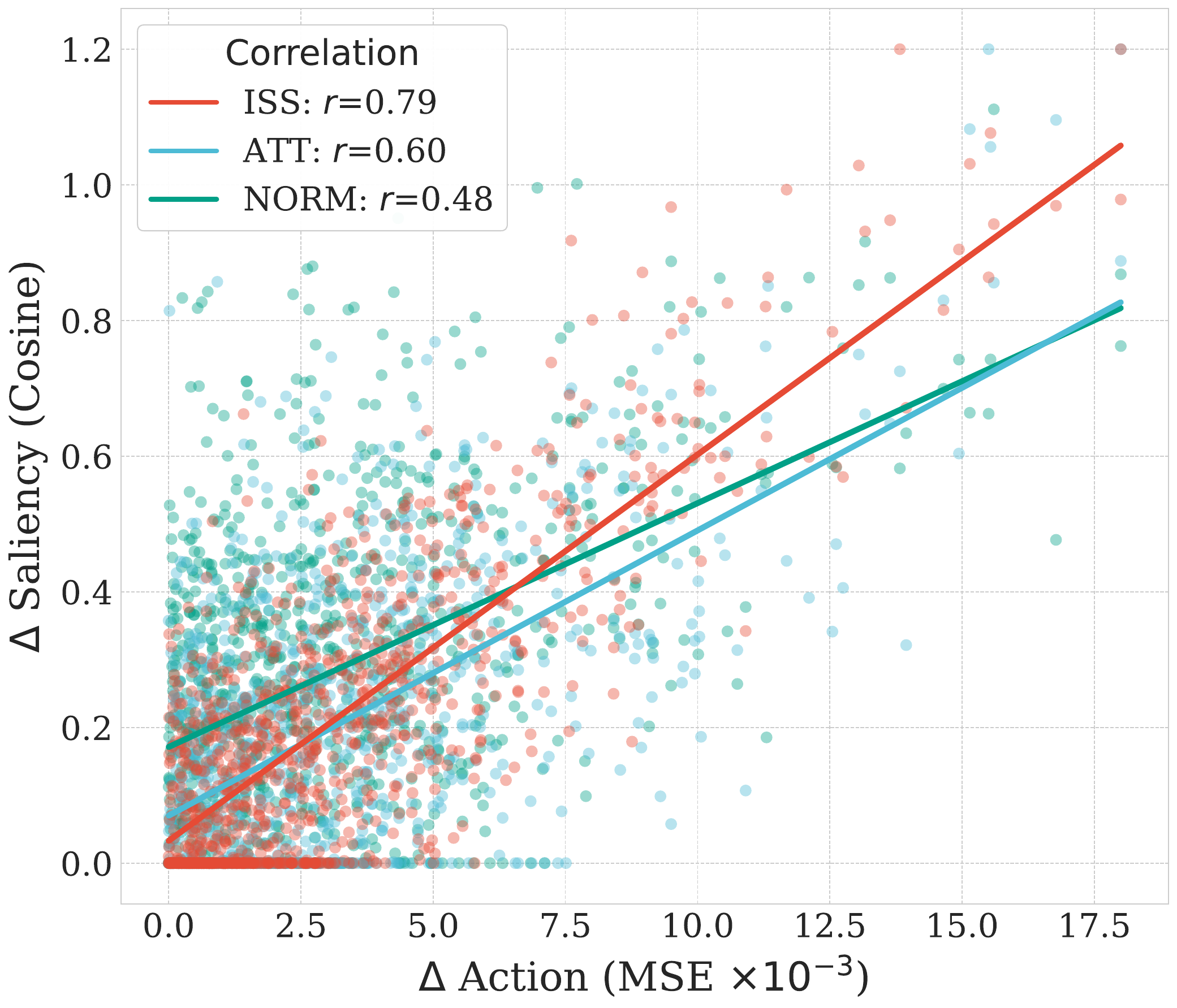} &
        \includegraphics[width=0.22\linewidth]{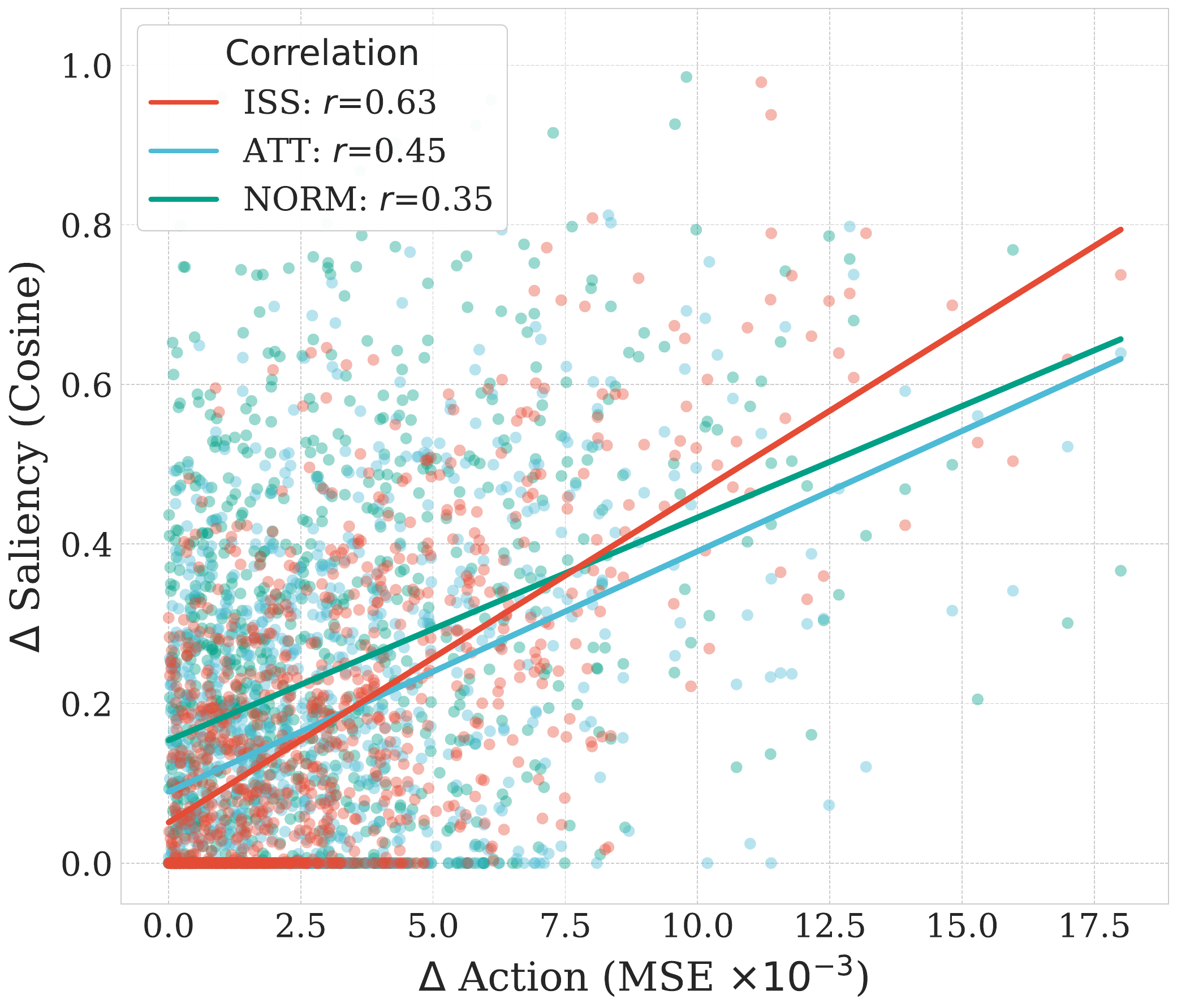} \\

        \rotatebox{90}{\textbf{All}} &
        \includegraphics[width=0.22\linewidth]{Figures/all_tasks_correlation_summary.pdf} &
        \includegraphics[width=0.22\linewidth]{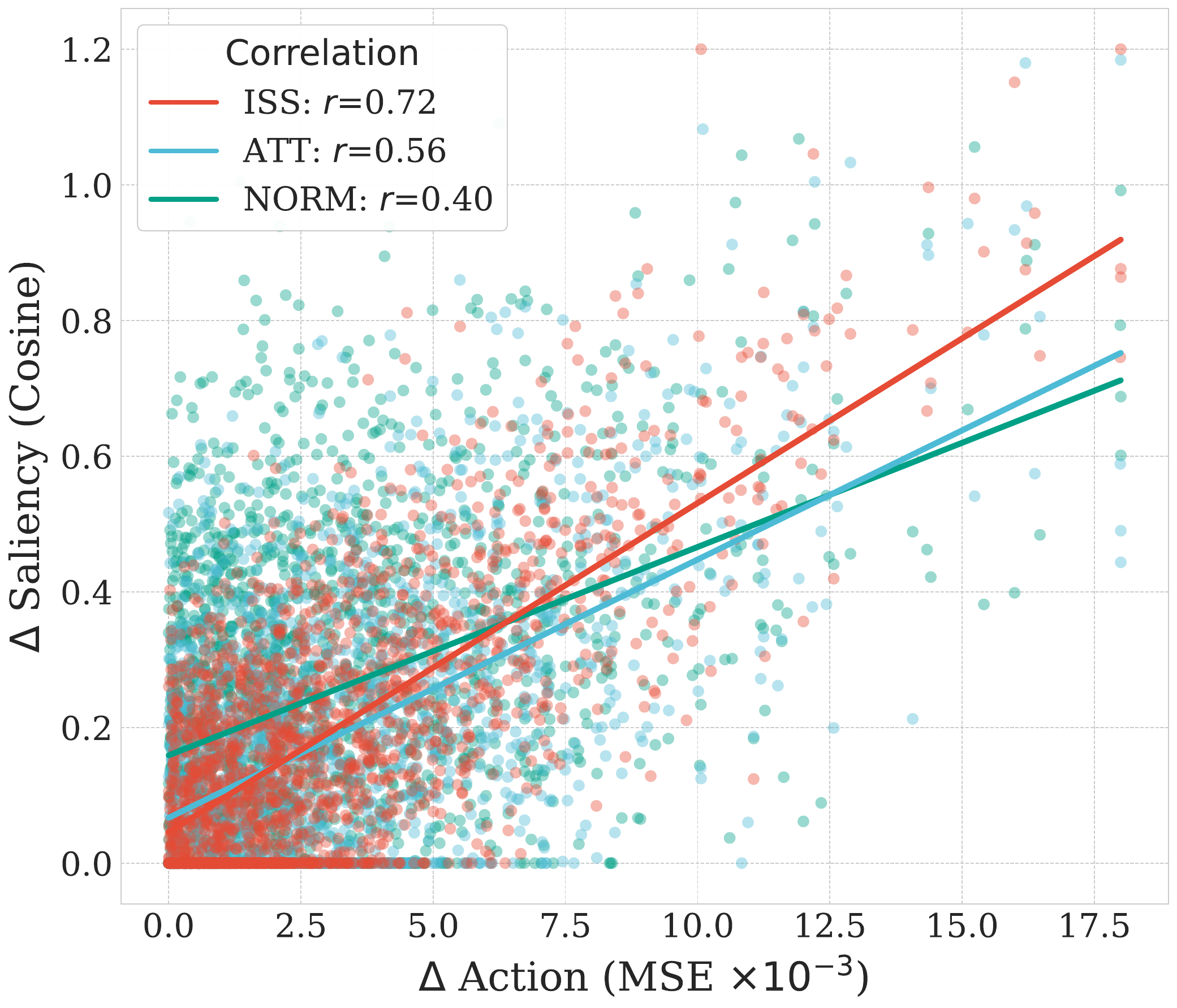} &
        \includegraphics[width=0.22\linewidth]{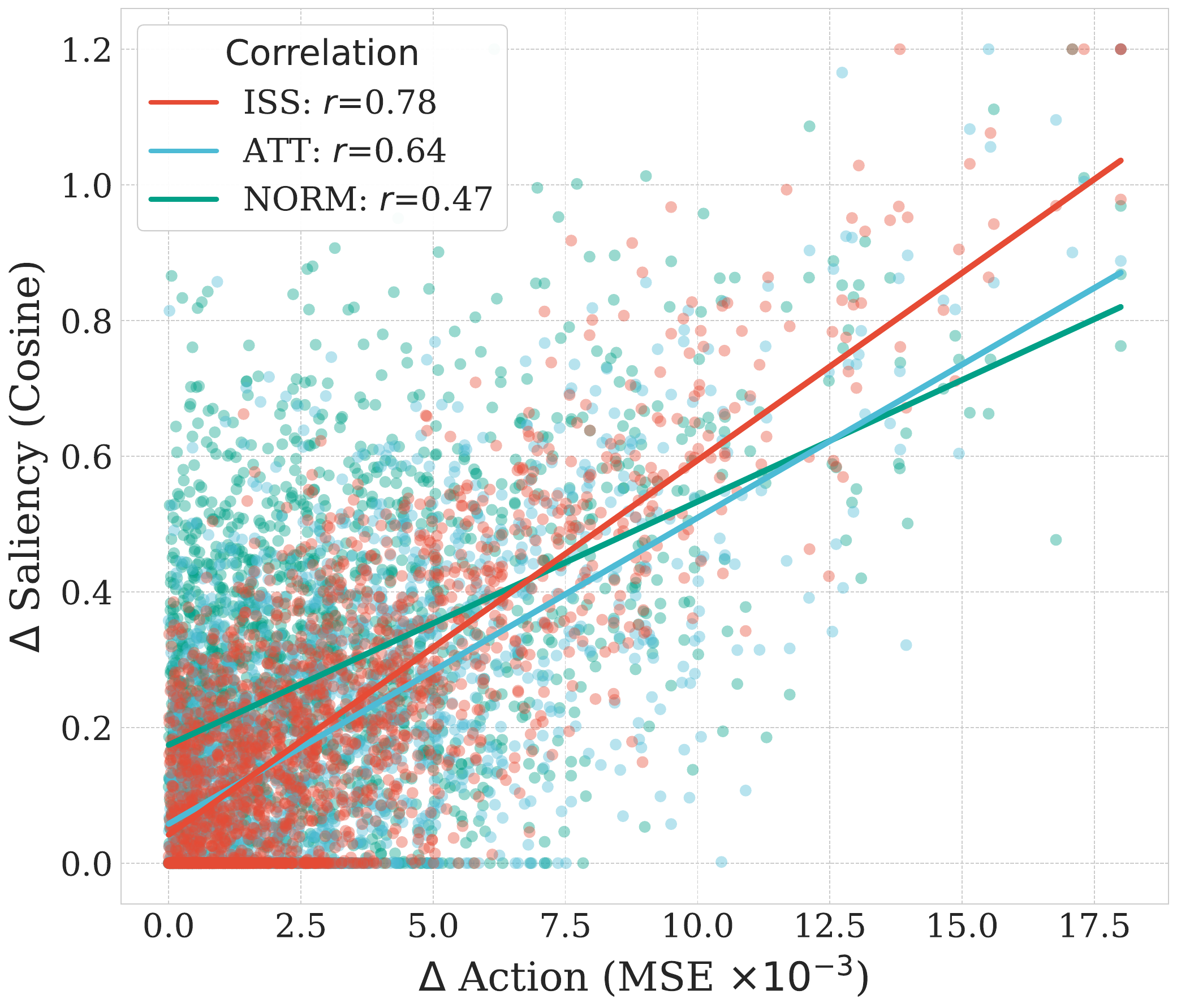} &
        \includegraphics[width=0.22\linewidth]{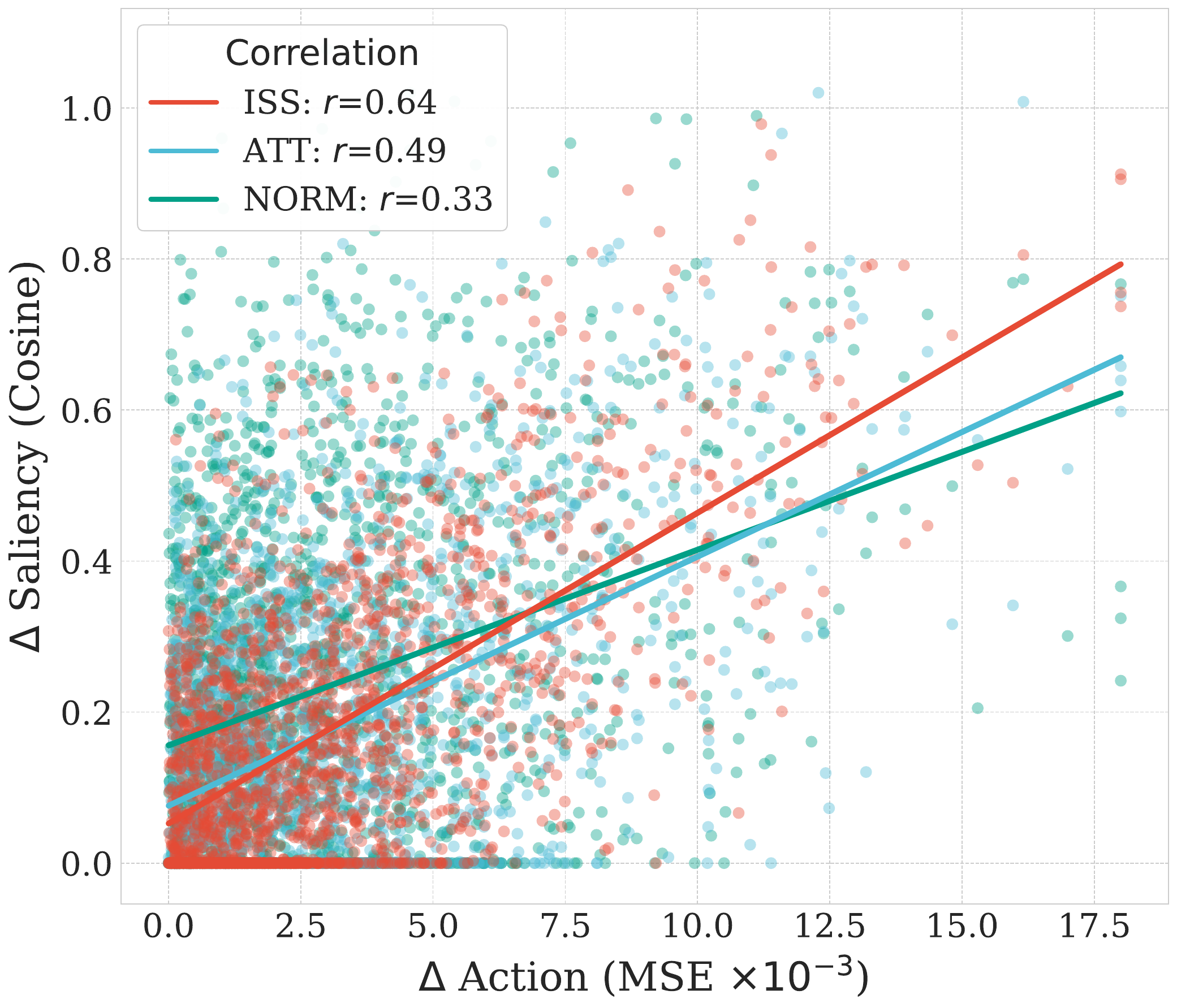} \\
    \end{tabular}

    \vspace{0.5em}

    \caption{
    \textbf{Saliency Fidelity Analysis} between $\Delta$ action and $\Delta$ saliency across evaluation splits and perturbation types.
    Rows correspond to evaluation splits (Seen, Unseen, All), while columns correspond to different analysis settings.
    The first column summarizes Pearson correlation coefficients across perturbations.
    The remaining columns present scatter plots for \text{ISS}, \text{ATT}, and \text{NORM} under TEXTURE, GEO, and PATCH perturbations, together with linear fits.
    }
    \label{fig:appendix_correlation_grid}
\end{figure*}

\subsection{Qualitative Results}
Figure \ref{fig:nuisance_analysis} visualizes heatmaps for the ``close jar'' task across multiple views to compare token norm $\phi = |\cdot|$, attention score $\phi = \text{ATT}$, and Interventional Significance Score $\phi = \text{ISS}$.
While NORM and ATT baselines exhibit high dispersion across task-irrelevant background textures, ISS demonstrates superior localization of causal entities.
Specifically, the front view ISS highlights the robot manipulator, whereas the wrist view focuses on the active gripper fingers and the target jar lid.
Within the $\pi_{0.5}$ VLM expert, $\rho^{(10)}_{|\cdot|}(\Omega_{\text{nuis}})$ and $\rho^{(10)}_{\text{ATT}}(\Omega_{\text{nuis}})$ show severe spatial dispersion over nuisance regions (e.g., $\rho^{(10)}_{|\cdot|} = 0.76$ in the front view).
This discrepancy reveals a misalignment where computational allocation defined by $\phi=|\cdot|$ and $\phi=\text{ATT}$ diverges from the actual causal features identified by $\phi=\text{ISS}$.
Consequently, although $\pi_{0.5}$ effectively learns the task-specific causal structure, it retains a residual dependence on irrelevant environmental contexts.

\begin{figure*}[t]
  \centering
  \includegraphics[width=\textwidth]{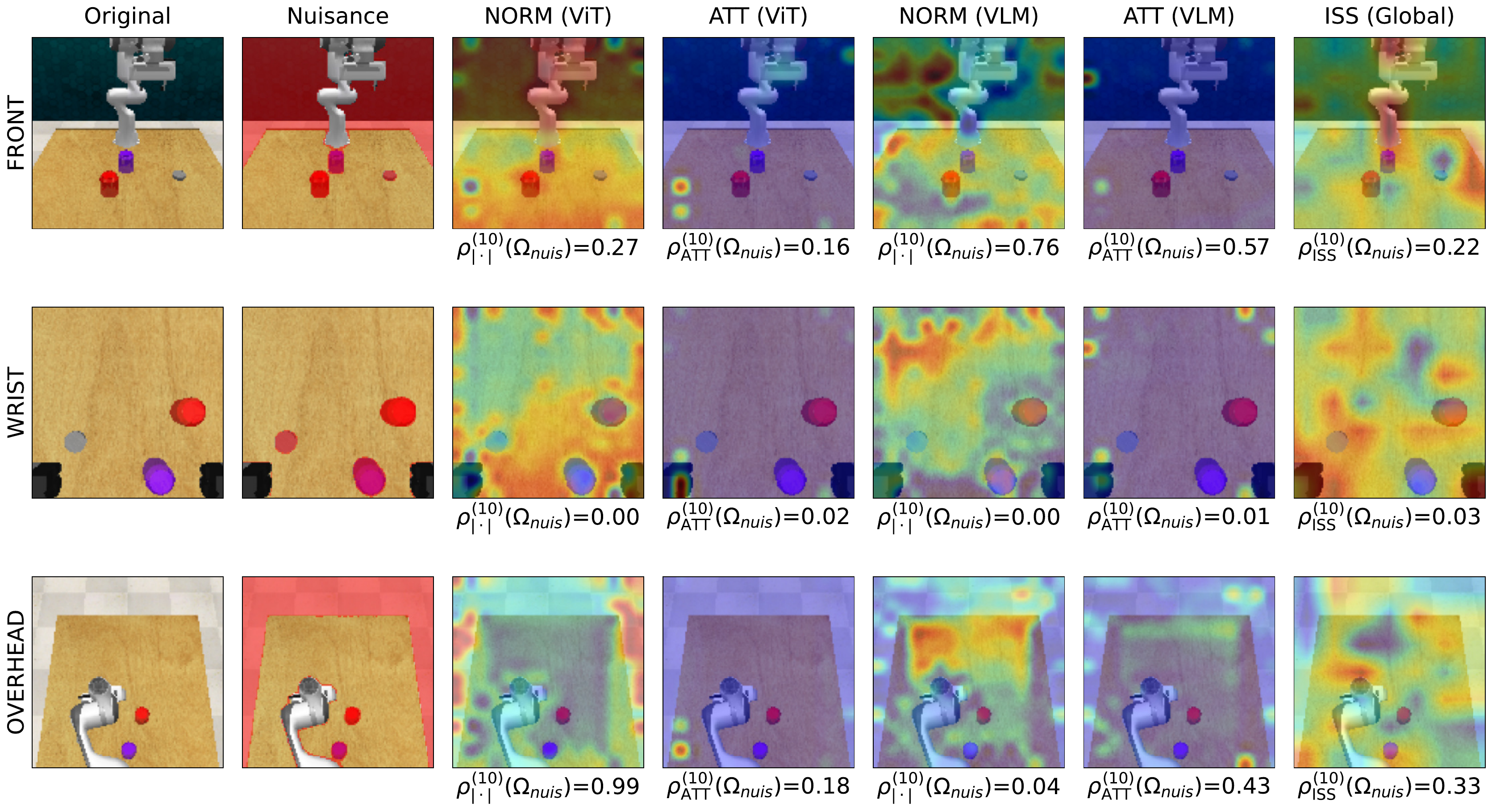}
  \caption{Visualization of saliency maps and nuisance mass ratio $\rho^{(k)}_{\phi}(\Omega_{nuis})$ with $k=10\%$ under token magnitude ($\phi=|\cdot|$), attention score ($\phi=\text{ATT}$), and interventional significance score ($\phi=\text{ISS}$) for the ``close jar'' task across different camera views (Front, Wrist, Overhead).}
  \label{fig:nuisance_analysis}
\end{figure*}


\section{Baseline Extraction and Metric Calculation Details}
\label{app:baseline_details}

\subsection{Architecture}
The VLA $\pi_{0.5}$ architecture integrates a Vision Encoder, a Gemma Language Model, and a specialized Action Expert. Specifically, it employs \textbf{SigLIP} \cite{zhai2023sigmoid} as the visual encoder to process RGB observations into visual tokens, and \textbf{PaliGemma} \cite{beyer2024paligemma} as the multimodal language-model backbone that integrates these visual tokens with textual instructions. To evaluate the intrinsic visual grounding capabilities of the baselines, we extract features and attention weights directly from the SigLIP encoder layers, prior to the cross-modal fusion in the PaliGemma decoder.

\subsection{Attention Score (ATT)}
The Attention Score baseline quantifies the significance of each visual patch using the self-attention mechanism in the Vision Encoder.

\textbf{Extraction Process:}
We extract the attention weights $A \in \mathbb{R}^{H \times N \times N}$ from the final transformer block (Block 26) of the SigLIP encoder. We select this specific layer to capture the most abstract semantic features aligned with the language instruction, effectively representing the highest-level visual processing before VLA integration. Here, $H$ denotes the number of attention heads, and $N$ represents the total number of tokens. The raw attention matrix is computed as:
$$
A = \text{softmax}\left(\frac{Q K^T}{\sqrt{d_k}}\right)
$$
where $Q, K \in \mathbb{R}^{N \times d_k}$ are the query and key projections of the visual tokens.

\textbf{Aggregation and Scoring:}
To derive a scalar importance score $S_{\text{ATT}}^{(i)}$ for the $i$-th visual token, we compute the mean across all attention heads to generate a unified attention map $\bar{A} \in \mathbb{R}^{N \times N}$. Since the visual tokens in the backbone of VLA $\pi_{0.5}$ utilize prefix attention mechanisms, allowing bidirectional communication within the image sequence, we quantify the total attention mass received by token $i$ by summing the weights along the query dimension:
$$
S_{\text{ATT}}^{(i)} = \sum_{j=1}^{N} \bar{A}_{j,i}
$$
This calculation treats the attention matrix as an adjacency matrix of a directed graph, where the score reflects the centrality or popularity of a visual patch within the backbone's internal representation.

\subsection{Token Norm (NORM)}
The Token Norm baseline assumes that the magnitude of the feature vector correlates with the informational content or activation intensity of the token.

\textbf{Extraction Process:}
We extract the final output embeddings $Z \in \mathbb{R}^{N \times D}$ from the SigLIP encoder, where $D$ is the embedding dimension.

\textbf{Scoring:}
The importance score $S_{\text{NORM}}^{(i)}$ for the $i$-th token is calculated as the $L_2$ norm of its embedding vector $z_i$:
$$
S_{\text{NORM}}^{(i)} = \|z_i\|_2 = \sqrt{\sum_{d=1}^{D} (z_{i,d})^2}
$$
Tokens with higher norms are hypothesized to have a stronger influence on subsequent layers of the VLA network, given the nature of Layer Normalization and residual connections.

\subsection{Saliency Map Generation and Metric Calculation}
\textbf{Heatmap Generation:}
To generate the heatmap, we first isolate the spatial visual tokens from the derived scores $S$. We explicitly exclude non-spatial tokens (e.g., registers or special start/end tokens) to ensure the remaining $N_{spatial}$ tokens correspond strictly to the image grid. The filtered scores are reshaped into a 2D grid of size $h \times w$ (where $h \times w = N_{\text{spatial}}$). This grid is then upsampled to the original image resolution $H_{img} \times W_{img}$ via bilinear interpolation to produce the continuous saliency map $M$.

\textbf{Nuisance Sensitivity Metric ($\rho$):}
To quantify the susceptibility of the model to task-irrelevant features, we calculate the Nuisance Sensitivity metric $\rho_{nuis}$. Given a binary ground-truth mask $B_{nuis}$ where 1 indicates a nuisance region (e.g., a distractor object or background noise) and 0 indicates task-relevant areas, the metric is defined as the proportion of total saliency mass falling within the nuisance region:
$$
\rho_{nuis} = \frac{\sum_{x,y} M_{x,y} \cdot B_{nuis, x,y}}{\sum_{x,y} M_{x,y}}
$$
A higher $\rho_{nuis}$ value indicates that the baseline method (ATT or NORM) incorrectly assigns higher importance to nuisance features, thereby failing to distinguish between causal and non-causal visual signals.


\section{Intervention and Implementation}
\label{app:perturbations}

\subsection{ISS Computation and Soft Interventions}

\textbf{Bernoulli Mask.}
To compute the \textbf{Interventional Significance Score (ISS)} for VLA $\pi_{0.5}$, we employ a randomized masking procedure rooted in the RISE algorithm \cite{petsiuk2018rise}. We generate a set of binary masks $\mathbf{m} \in \{0, 1\}^{M}$ using hyperparameters optimized for convergence ($N=100, p=0.3$). These masks are sampled on a coarse $7 \times 7$ grid and subsequently upsampled to the target resolution to ensure the perturbations maintain spatial coherence rather than pixel-level noise. For each time step $t$, the perturbed observation $\tilde{\mathbf{V}}_{t,k}$ fuses the original image $\mathbf{V}_t$ and a Gaussian-blurred version $\mathbf{V}_t^{blur}$ using the mask $\mathbf{m}_k$:
\begin{equation}
    \tilde{\mathbf{V}}_{t,k} = \mathbf{V}_t \odot \mathbf{m}_k + \mathbf{V}_t^{blur} \odot (\mathbf{1} - \mathbf{m}_k)
\end{equation}
We quantify the significance of the occluded regions by accumulating the $L_2$ divergence between the baseline action $\mathbf{a}_t^*$ and the perturbed action predicted by $\pi_{0.5}$. The final saliency map $\mathbf{S}_t$ is the normalized average of these weighted divergences:
\begin{equation}
    \mathbf{S}_t = \frac{1}{N(1-p)} \sum_{k=1}^{N} \|\pi_{0.5}(\tilde{\mathbf{V}}_{t,k}) - \mathbf{a}_t^*\|_2^2 \cdot (\mathbf{1} - \mathbf{m}_k)
\end{equation}

\textbf{Gaussian Noise.}
To assess the robustness of $\pi_{0.5}$ against irrelevant visual changes, we introduce additive Gaussian noise only in the nuisance regions identified as \textbf{Soft Interventions}. We sample noise $\epsilon \sim \mathcal{N}(0, 0.25)$ and inject it into the normalized observation $\mathbf{V} \in [0, 1]$ exclusively where the semantic mask indicates non-essential data (denoted as $\mathbf{m}_{sem}=0$). Notably, this noise configuration employs the same standard deviation ($\sigma=0.25$) as the Gaussian perturbation used in our ISS computation to generate the perturbed (blurred) observations. While ISS applies this perturbation via randomized masking to probe feature significance, we restrict noise injection here to nuisance regions to strictly evaluate policy resilience against background distribution shifts of equal magnitude. The noisy observation $\mathbf{V}_{noisy}$ is computed via pixel-wise clipping to maintain valid intensity ranges:
\begin{equation}
    \mathbf{V}_{noisy} = \text{clip}(\mathbf{V} + \epsilon, 0, 1) \odot \mathbb{I}_{\{\mathbf{m}_{sem}=0\}} + \mathbf{V} \odot \mathbb{I}_{\{\mathbf{m}_{sem} \neq 0\}}
\end{equation}

\subsection{Hard Perturbation}

Our framework operationalizes Pearl's theory \cite{Pearl_2009, Pearl12} of causal intervention by shifting the evaluation from the observational distribution $P(\mathbf{a}|\mathbf{V})$ to the interventional quantity $P(\mathbf{a} \mid do(\mathbf{V} := \mathcal{T}(\mathbf{V})))$, where the operator $do(\cdot)$ signifies an active manipulation of the data-generating process. By explicitly defining the transformation set $\mathcal{T} = \{\mathcal{T}_{tex}, \mathcal{T}_{geo}, \mathcal{T}_{patch}\}$ (as formulated in Eq. \ref{eq:TEXT}, \ref{eq:GEO}, \ref{eq:PATCH}), we systematically intervene on specific environmental confounders: texture variation, geometric warping, and visual occlusion.

\textbf{Texture Perturbation.}
To simulate complex surface variations and sensor noise without altering the underlying geometry, we employ a Fractional Brownian Motion (fBM) noise model. We generate the perturbation field $\mathcal{N}_{fbm}$ by aggregating 4 octaves of Gaussian noise, where each octave is spatially smoothed with a kernel $\sigma_k = 1.5 \cdot f_k$ (where frequency $f_k$ doubles per octave) and weighted by an amplitude decay of $0.5$. This multi-scale noise is normalized and added to the observation $\mathbf{V}$. The perturbation intensity is controlled by a scalar $\lambda \in [0, 1]$ and a base scaling factor $\alpha=60.0$, ensuring the texture shift is visible but valid within the $[0, 255]$ pixel range. The operation is constrained to the specific semantic region defined by the binary mask $\mathbf{m}_{sem}$ (e.g., nuisance or support objects):
\begin{equation}
    \label{eq:TEXT}
    \mathbf{V}_{tex} = \text{clip}\left(\mathbf{V} + \lambda \cdot \alpha \cdot \frac{\mathcal{N}_{fbm}}{\text{std}(\mathcal{N}_{fbm})}, 0, 255\right) \odot \mathbf{m}_{sem} + \mathbf{V} \odot (\mathbf{1} - \mathbf{m}_{sem})
\end{equation}

\textbf{Geometric Perturbation.}
To evaluate robustness against elastic deformations and camera lens distortions, we implement a structural warping mechanism based on random displacement fields. We sample independent horizontal and vertical flow fields $\delta_x, \delta_y \sim \mathcal{N}(0, 1)$ and smooth them using a Gaussian filter with $\sigma=10.0$ to enforce local spatial continuity, simulating elastic material properties rather than white noise. The displacement magnitude is governed by $\lambda$ and a maximum pixel shift parameter $\beta=25.0$. The perturbed image $\mathbf{V}_{geo}$ is reconstructed via bilinear interpolation at the displaced coordinates $\mathbf{p}' = \mathbf{p} + (\Delta x, \Delta y)$, applied exclusively within the target semantic region $\mathbf{m}_{sem}$:
\begin{equation}
    \label{eq:GEO}
    \mathbf{V}_{geo}(\mathbf{p}) = \mathbf{V}(\mathbf{p} + \lambda \cdot \beta \cdot \hat{\delta}(\mathbf{p})) \odot \mathbf{m}_{sem}(\mathbf{p}) + \mathbf{V}(\mathbf{p}) \odot (1 - \mathbf{m}_{sem}(\mathbf{p}))
\end{equation}
where $\hat{\delta}$ represents the normalized and smoothed displacement vectors.

\textbf{Patch Perturbation.}
We employ randomized occlusion to simulate the loss of visual information or the presence of obscurants. Following the RISE algorithm utilized in our ISS computation, we generate binary occlusion masks $\mathbf{m}_{patch} \sim \text{Bernoulli}(p)$ with a retention probability $p=0.3$. These masks are sampled on a low-resolution $7 \times 7$ grid and bilinearly upsampled to the image resolution $(H, W)$ to create coherent blocking patches rather than scattered pixel noise. The perturbation replaces the original pixel values in the occluded regions with a blurred reference background $\mathbf{V}_{blur}$, thereby removing high-frequency information while preserving local color statistics:
\begin{equation}
    \label{eq:PATCH}
    \mathbf{V}_{patch} = \mathbf{V} \odot \mathbf{m}_{patch} + \mathbf{V}_{blur} \odot (\mathbf{1} - \mathbf{m}_{patch})
\end{equation}



\begin{table*}[t]
\centering
\footnotesize
\setlength{\tabcolsep}{2.5pt}
\renewcommand{\arraystretch}{0.95}
\begin{tabular}{l c c c c c c}
\toprule
Task & Seed 0 & Seed 13 & Seed 50 & Seed 77 & Seed 99 & Mean $\pm$ Std \\
\midrule

\\[2pt]
\multicolumn{7}{l}{\textbf{Seen Tasks}} \\
\midrule
close\_jar & 84 & 88 & 88 & 88 & 88 & 87.20 $\pm$ 1.79 \\
insert\_onto\_square\_peg & 0 & 0 & 0 & 0 & 0 & 0.00 $\pm$ 0.00 \\
light\_bulb\_in & 52 & 56 & 52 & 52 & 52 & 52.80 $\pm$ 1.79 \\
meat\_off\_grill & 100 & 100 & 100 & 100 & 100 & 100.00 $\pm$ 0.00 \\
open\_drawer & 84 & 80 & 84 & 80 & 84 & 82.40 $\pm$ 2.19 \\
place\_cups & 0 & 0 & 0 & 0 & 0 & 0.00 $\pm$ 0.00 \\
place\_shape\_in\_shape\_sorter & 0 & 0 & 0 & 0 & 0 & 0.00 $\pm$ 0.00 \\
place\_wine\_at\_rack\_location & 100 & 100 & 100 & 100 & 100 & 100.00 $\pm$ 0.00 \\
put\_groceries\_in\_cupboard & 28 & 28 & 28 & 28 & 28 & 28.00 $\pm$ 0.00 \\
put\_item\_in\_drawer & 20 & 20 & 20 & 20 & 20 & 20.00 $\pm$ 0.00 \\
put\_money\_in\_safe & 28 & 28 & 28 & 28 & 28 & 28.00 $\pm$ 0.00 \\
push\_buttons & 100 & 100 & 100 & 100 & 100 & 100.00 $\pm$ 0.00 \\
reach\_and\_drag & 100 & 100 & 100 & 100 & 100 & 100.00 $\pm$ 0.00 \\
slide\_block\_to\_color\_target & 100 & 100 & 100 & 100 & 100 & 100.00 $\pm$ 0.00 \\
stack\_blocks & 60 & 56 & 60 & 56 & 60 & 58.40 $\pm$ 2.19 \\
stack\_cups & 20 & 20 & 20 & 20 & 20 & 20.00 $\pm$ 0.00 \\
sweep\_to\_dustpan\_of\_size & 100 & 96 & 100 & 100 & 100 & 99.20 $\pm$ 1.79 \\
turn\_tap & 96 & 96 & 96 & 96 & 96 & 96.00 $\pm$ 0.00 \\
\midrule
\textbf{Seen Average} & 59.56 & 58.44 & 59.78 & 58.89 & 58.67 & 59.07 $\pm$ 0.58 \\
\midrule

\\[6pt]
\multicolumn{7}{l}{\textbf{Unseen Tasks Level 1 (Similar Objects/Motions)}} \\
\\[-6pt]
\midrule
close\_fridge & 24 & 24 & 20 & 24 & 24 & 23.20 $\pm$ 1.79 \\
close\_laptop\_lid & 28 & 32 & 36 & 32 & 36 & 32.80 $\pm$ 3.35 \\
close\_microwave & 52 & 48 & 40 & 40 & 44 & 44.80 $\pm$ 4.97 \\
lamp\_off & 64 & 60 & 48 & 56 & 64 & 58.40 $\pm$ 6.07 \\
lamp\_on & 52 & 52 & 60 & 48 & 40 & 50.40 $\pm$ 7.70 \\
open\_grill & 8 & 12 & 16 & 8 & 0 & 8.80 $\pm$ 6.10 \\
phone\_on\_base & 52 & 60 & 68 & 60 & 52 & 58.40 $\pm$ 6.57 \\
put\_books\_on\_bookshelf & 0 & 4 & 4 & 0 & 0 & 1.60 $\pm$ 2.19 \\
put\_knife\_on\_chopping\_board & 32 & 28 & 16 & 20 & 32 & 25.60 $\pm$ 7.23 \\
put\_rubbish\_in\_bin & 28 & 20 & 16 & 20 & 12 & 19.20 $\pm$ 6.76 \\
put\_toilet\_roll\_on\_stand & 0 & 4 & 4 & 0 & 0 & 1.60 $\pm$ 2.19 \\
put\_umbrella\_in\_umbrella\_stand & 0 & 0 & 0 & 0 & 0 & 0.00 $\pm$ 0.00 \\
toilet\_seat\_down & 52 & 52 & 60 & 48 & 32 & 48.80 $\pm$ 10.99 \\
\midrule
\textbf{Level 1 Average} & 30.15 & 30.46 & 29.85 & 27.38 & 25.85 & 28.74 $\pm$ 1.97 \\
\midrule

\\[6pt]
\multicolumn{7}{l}{\textbf{Unseen Tasks Level 2 (No Similar Objects/Motions)}} \\
\\[-6pt]
\midrule
basketball\_in\_hoop & 12 & 12 & 8 & 8 & 8 & 9.60 $\pm$ 2.19 \\
beat\_the\_buzz & 4 & 4 & 0 & 4 & 4 & 3.20 $\pm$ 1.79 \\
scoop\_with\_spatula & 0 & 0 & 0 & 0 & 0 & 0.00 $\pm$ 0.00 \\
straighten\_rope & 8 & 8 & 8 & 8 & 4 & 7.20 $\pm$ 1.79 \\
take\_lid\_off\_saucepan & 32 & 20 & 16 & 16 & 12 & 19.20 $\pm$ 8.29 \\
take\_plate\_off\_colored\_dish\_rack & 4 & 8 & 12 & 8 & 4 & 7.20 $\pm$ 3.35 \\
take\_usb\_out\_of\_computer & 100 & 100 & 96 & 96 & 100 & 98.40 $\pm$ 2.19 \\
turn\_oven\_on & 12 & 16 & 20 & 16 & 16 & 16.00 $\pm$ 2.83 \\
unplug\_charger & 8 & 4 & 4 & 4 & 0 & 4.00 $\pm$ 2.83 \\
water\_plants & 4 & 4 & 8 & 4 & 4 & 4.80 $\pm$ 1.79 \\
\midrule
\textbf{Level 2 Average} & 18.40 & 17.60 & 17.20 & 16.40 & 15.20 & 16.96 $\pm$ 1.21 \\
\midrule
\textbf{Total Unseen Average} & 25.04 & 26.26 & 24.35 & 23.13 & 21.22 & 24.00 $\pm$ 1.92 \\
\bottomrule
\end{tabular}
\vspace{10pt}
\caption{Success rates (\%) of $\pi_{0.5}$ on seen and unseen tasks.}
\label{tab:pi05_all_success}
\end{table*}

\begin{figure}[ht]
    \centering

    \begin{subfigure}{\linewidth}
        \centering
        \includegraphics[width=0.97\linewidth]{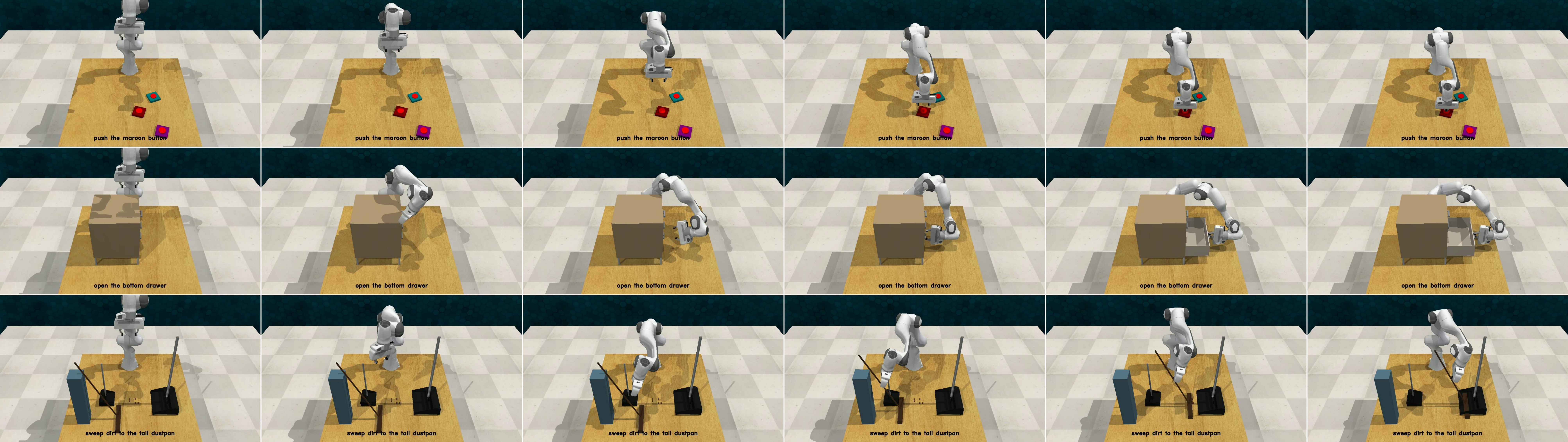}
        \caption{\textbf{Successful Episodes of Seen Tasks: }``push the maroon button'', ``open the bottom drawer'', ``sweep dirt to the dustpan'' (top to bottom).
        }
    \end{subfigure}


    \begin{subfigure}{\linewidth}
        \centering
        \includegraphics[width=0.97\linewidth]{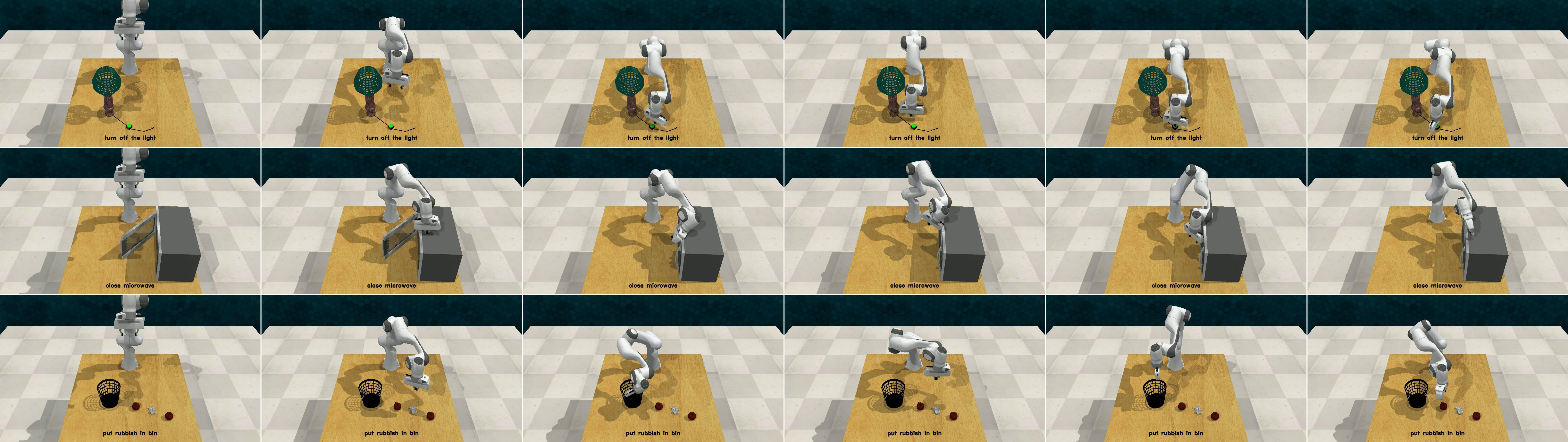}
        \caption{\textbf{Successful Episodes of Unseen Tasks: } ``turn off the light'', ``close microwave'', ``put rubbish in bin'' (top to bottom).}
    \end{subfigure}


    \begin{subfigure}{\linewidth}
        \centering
        \includegraphics[width=0.97\linewidth]{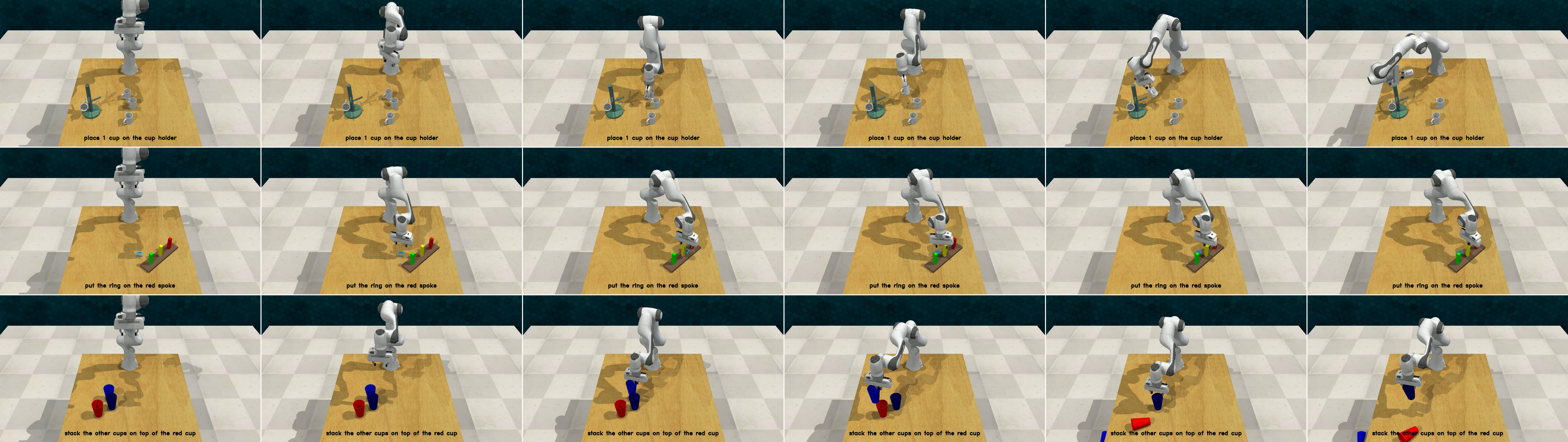}
        \caption{\textbf{Failed Episodes of Seen Tasks: } ``place 1 cup on the cup holder'', ``put the ring on the red spoke'', ``stack the other cups on top of the red cup'' (top to bottom). }
    \end{subfigure}


    \begin{subfigure}{\linewidth}
        \centering
        \includegraphics[width=0.97\linewidth]{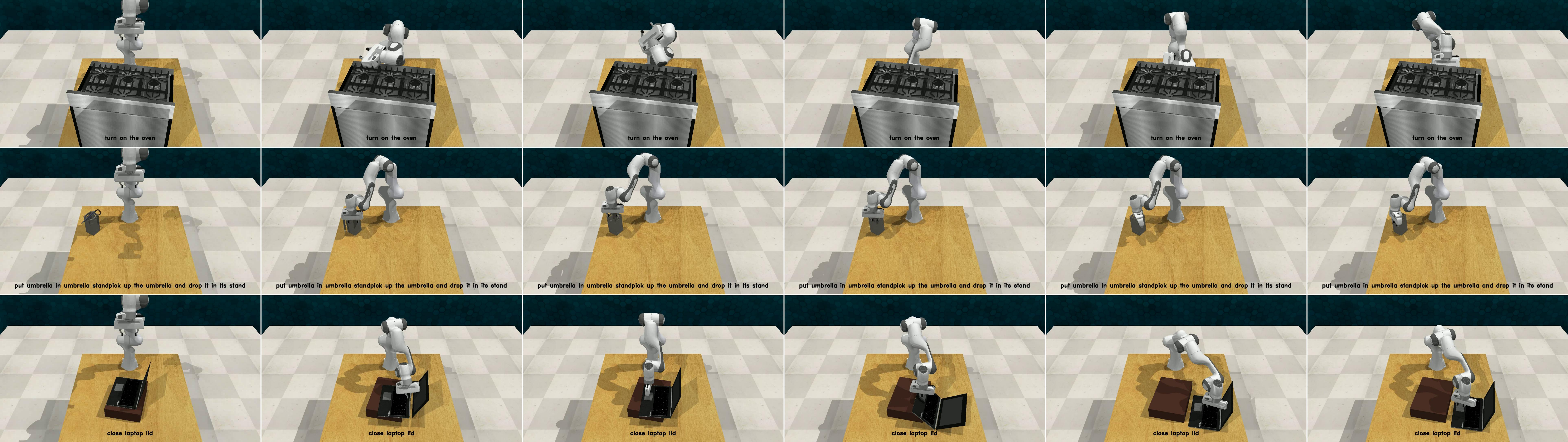}
        \caption{\textbf{Failed Episodes of Unseen Tasks: } ``turn on the oven'', ``put umbrella in umbrella stand'', ``close laptop lid'' (top to bottom).}
    \end{subfigure}

    \caption{Representative episodes across seen and unseen tasks.}
    \label{fig:episode_montage}
\end{figure}

\end{document}